\documentclass{article}
\usepackage[dvipsnames]{xcolor}
\usepackage{times}
\usepackage{amsmath,amsbsy,amsfonts,amsthm, units}
\usepackage{graphicx}
\usepackage{tablefootnote}
\usepackage{url}
\usepackage{cases}
\usepackage{tikz}
\usetikzlibrary{calc,shapes}
\usepackage{subfigure}
\definecolor{DarkGreen}{rgb}{0.1,0.5,0.1}
\definecolor{DarkRed}{rgb}{0.5,0.1,0.1}
\definecolor{DarkBlue}{rgb}{0.1,0.1,0.5}
\usepackage{nicefrac}
\usepackage{algorithm}
\usepackage{algorithmic}
\usepackage{xspace}
\usepackage{wrapfig}
\usepackage{hyperref}

\newcommand{\cD}{\mathcal{D}}
\newcommand{\relu}{\textup{ReLu}}
\newcommand{\pp}[1]{^{(#1)}}
\newcommand{\gnorm}[1]{{\left\vert\kern-0.25ex\left\vert\kern-0.25ex\left\vert #1 
		\right\vert\kern-0.25ex\right\vert\kern-0.25ex\right\vert}}

\newtheorem{theorem}{Theorem}[section]

\newtheorem{lemma}[theorem]{Lemma}
\newtheorem{claim}[theorem]{Claim}

\newtheorem{assumption}[theorem]{Assumption}

\theoremstyle{definition}

\numberwithin{equation}{section}

\newcommand{\IGNORE}[1]{}

\newcommand\E{\mathbb{E}}

\DeclareMathOperator{\diag}{diag}

\newcommand\inner[1]{\langle #1 \rangle}

\newcommand{\Exp}{\mathop{\mathbb E}\displaylimits}

\newcommand{\cB}{\mathcal{B}}
\newcommand{\cT}{\mathcal{T}}

\newcommand{\allones}{\mathbf{1}}

\newcommand{\norm}[1]{\lVert#1\rVert}
\newcommand{\Norm}[1]{\left\lVert#1\right\rVert}
\newcommand{\Id}{I}

\newcommand{\mper}{\,.}
\newcommand{\mcom}{\,,}

\newcommand\R{\mathbb{R}}
\newcommand\C{\mathbb{C}}
\newcommand\N{\mathcal{N}}

\newcommand{\partialgrad}[2]{{\frac{\partial #1 }{\partial #2}}}

\newcommand{\trace}{\textup{tr}}

\let\epsilon=\varepsilon

\numberwithin{equation}{section}

\newcommand\MYcurrentlabel{xxx}

\newcommand{\MYstore}[2]{	\global\expandafter \def \csname MYMEMORY #1 \endcsname{#2}}

\newcommand{\MYload}[1]{	\csname MYMEMORY #1 \endcsname}

\newcommand{\MYnewlabel}[1]{	\renewcommand\MYcurrentlabel{#1}	\MYoldlabel{#1}}

\newcommand{\MYdummylabel}[1]{}

\newcommand{\torestate}[1]{		\let\MYoldlabel\label	\let\label\MYnewlabel	#1	\MYstore{\MYcurrentlabel}{#1}		\let\label\MYoldlabel}

\newcommand{\restatetheorem}[1]{		\let\MYoldlabel\label
	\let\label\MYdummylabel
	\begin{theorem*}[Restatement of \prettyref{#1}]
		\MYload{#1}
	\end{theorem*}
	\let\label\MYoldlabel
}

\newcommand{\restatelemma}[1]{		\let\MYoldlabel\label
	\let\label\MYdummylabel
	\begin{lemma*}[Restatement of \prettyref{#1}]
		\MYload{#1}
	\end{lemma*}
	\let\label\MYoldlabel
}

\newcommand{\restateprop}[1]{		\let\MYoldlabel\label
	\let\label\MYdummylabel
	\begin{proposition*}[Restatement of \prettyref{#1}]
		\MYload{#1}
	\end{proposition*}
	\let\label\MYoldlabel
}

\newcommand{\restatefact}[1]{		\let\MYoldlabel\label
	\let\label\MYdummylabel
	\begin{fact*}[Restatement of \prettyref{#1}]
		\MYload{#1}
	\end{fact*}
	\let\label\MYoldlabel
}

\newcommand{\restate}[1]{		\let\MYoldlabel\label
	\let\label\MYdummylabel
	\MYload{#1}
	\let\label\MYoldlabel
}

\def\shownotes{0}  \ifnum\shownotes=1
\newcommand{\authnote}[2]{$\ll$\textsf{\footnotesize #1 notes: #2}$\gg$}
\else
\newcommand{\authnote}[2]{}
\fi
\newcommand{\Tnote}[1]{{\color{blue}\authnote{Tengyu}{#1}}}

\title{Identity Matters in Deep Learning}
\author{Moritz Hardt \thanks{
	Google Brain. 
	\texttt{m@mrtz.org}}.  
	\and 
	Tengyu Ma\thanks{
Stanford University. 
\texttt{tengyu@stanford.edu}. Work performed at Google. }}

\begin{document}
	\maketitle
\begin{abstract}
An emerging design principle in deep learning is that each layer of a deep
artificial neural network should be able to easily express the identity
transformation. This idea not only motivated various normalization techniques,
such as \emph{batch normalization}, but was also key to the immense success of
\emph{residual networks}.

In this work, we put the principle of \emph{identity parameterization} on a
more solid theoretical footing alongside further empirical progress. We first
give a strikingly simple proof that arbitrarily deep linear residual networks
have no spurious local optima. The same result for linear feed-forward networks in
their standard parameterization is substantially more delicate.  Second, we
show that residual networks with ReLu activations have universal finite-sample
expressivity in the sense that the network can represent any function of its
sample provided that the model has more parameters than the sample size.

Directly inspired by our theory, we experiment with a radically simple
residual architecture consisting of only residual convolutional layers and
ReLu activations, but no batch normalization, dropout, or max pool. Our model
improves significantly on previous all-convolutional networks on the CIFAR10,
CIFAR100, and ImageNet classification benchmarks.

\end{abstract}
\section{Introduction}

Traditional convolutional neural networks for image classification, such as AlexNet (\cite{krizhevsky2012imagenet}), are parameterized in such a way that when all trainable weights are~$0$, a convolutional layer represents the $0$-mapping. Moreover, the weights are initialized symmetrically around~$0.$ This standard parameterization makes it non-trivial for a convolutional layer trained with stochastic gradient methods to preserve features that were already good. Put differently, such convolutional layers cannot easily converge to the identity transformation at training time.

This shortcoming was observed and partially addressed by \cite{DBLP:conf/icml/IoffeS15} through \emph{batch normalization}, i.e., layer-wise whitening of the input with a learned mean and covariance. But the idea remained somewhat implicit until \emph{residual networks} (\cite{he15deepresidual}; \cite{DBLP:conf/eccv/HeZRS16})  explicitly introduced a reparameterization of the convolutional layers such that when all trainable weights are~$0,$ the layer represents the identity function. Formally, for an input $x,$ each residual layer has the form $x+h(x),$ rather than $h(x).$ This simple reparameterization allows for much deeper architectures largely avoiding the problem of vanishing (or exploding) gradients. Residual networks, and subsequent architectures that use the same parameterization, have since then consistently achieved state-of-the-art results on various computer vision benchmarks such as CIFAR10 and ImageNet.

\subsection{Our contributions}

In this work, we consider identity parameterizations from a theoretical perspective, while translating some of our theoretical insight back into experiments. Loosely speaking, our first result underlines how identity parameterizations make optimization easier, while our second result shows the same is true for representation.

\paragraph{Linear residual networks.}
Since general non-linear neural networks, are beyond the reach of current
theoretical methods in optimization, we consider the case of deep
\emph{linear} networks as a simplified model. A linear network represents an
arbitrary linear map as a sequence of matrices $A_\ell\cdots A_2A_1.$ The
objective function is $\E\|y-A_\ell\cdots A_1x\|^2$, where $y=Rx$ for some
unknown linear transformation~$R$ and $x$ is drawn from a distribution. Such
linear networks have been studied actively in recent years as a stepping stone
toward the general non-linear case (see Section~\ref{sec:related}). 
Even though $A_\ell\cdots A_1$ is just a
linear map, the optimization problem over the factored variables
$(A_\ell,\dots,A_1)$ is non-convex.

In analogy with residual networks, we will instead parameterize the objective function as
\begin{equation}\label{eq:linear-resnet}
\min_{A_1,\dots,A_\ell}\E\|y-(I+A_\ell)\cdots(I+A_1)x\|^2\,.
\end{equation}
To give some intuition, when the depth~$\ell$ is large enough, we can hope that the target function~$R$ has a factored representation in which each matrix $A_i$ has small norm. Any symmetric positive semidefinite matrix~$O$ can, for example, be written as a product $O=O_\ell\cdots O_1,$ where each $O_i=O^{1/\ell}$ is very close to the identity for large~$\ell$ so that $A_i=O_i-I$ has small spectral norm. 
We first prove that an analogous claim is true for all linear transformations~$R$ with positive determinant\footnote{As will be discussed below Theorem~\ref{thm:existence}, it is without loss of generality to assume that the determinant of $R$ is positive. }.  Specifically, we prove that for every linear transformation~$R$ with $\det(R) > 0$,  there exists a global optimizer $(A_1,\dots, A_\ell)$ of~\eqref{eq:linear-resnet} such that for large enough depth~$\ell,$
\begin{equation}
\max_{1\le i\le \ell}\|A_i\|\le O(1/\ell).
\end{equation}
Here, $\|A\|$ denotes the spectral norm of~$A.$ The constant factor depends on the conditioning of~$R.$ We give the formal statement in Theorem~\ref{thm:existence}. The theorem has the interesting consequence that as the depth increases, smaller norm solutions exist and hence regularization may offset the increase in parameters.

Having established the existence of small norm solutions, our main result on
linear residual networks shows that the objective
function~\eqref{eq:linear-resnet} is, in fact, easy to optimize when all
matrices have sufficiently small norm. More formally, letting
$A=(A_1,\dots,A_\ell)$ and $f(A)$ denote the objective function
in~\eqref{eq:linear-resnet}, we can show that the gradients vanish only
when~$f(A)=0$ provided that $\max_i\|A_i\|\le O(1/\ell).$ See
Theorem~\ref{thm:main}. This result implies that linear residual networks
have no \emph{critical points} other than the global optimum. 
In contrast, for standard linear neural networks we only know, by work
of \cite{Kawaguchi} that these networks don't have local optima except the
global optimum, but it doesn't rule out other critical points. In fact,
setting $A_i=0$ will always lead to a bad critical point in the standard
parameterization.

\paragraph{Universal finite sample expressivity.}
Going back to non-linear residual networks with ReLU activations, we can ask:
How expressive are deep neural networks that are solely based on residual
layers with ReLU activations? To answer this question, we give a very simple
construction showing that such residual networks have perfect finite sample
expressivity. In other words, a residual network with ReLU activations can
easily express any functions of a sample of size $n,$ provided that it has
sufficiently more than $n$ parameters. Note that this requirement is easily
met in practice. On CIFAR 10 ($n=50000$), for example, successful residual
networks often have more than $10^6$ parameters. More formally, for a data set
of size $n$ with $r$ classes, our construction requires $O(n\log n + r^2)$ 
parameters. Theorem~\ref{thm:representation} gives the formal statement.

Each residual layer in our construction is of the form $x + V\mathrm{ReLU}(Ux),$ where $U$ and $V$ are linear transformations. These layers are significantly simpler than standard residual layers, which typically have two ReLU activations as well as two instances of batch normalization.

\paragraph{The power of all-convolutional residual networks.}
Directly inspired by the simplicity of our expressivity result, we experiment
with a very similar architecture on the CIFAR10, CIFAR100, and ImageNet data
sets. Our architecture is merely a chain of convolutional residual layers each
with a single ReLU activation, but without batch normalization, dropout, or
max pooling as are common in standard architectures. The last layer is a fixed
random projection that is not trained. In line with our theory, the
convolutional weights are initialized near~$0,$ using Gaussian noise mainly as
a symmetry breaker. The only regularizer is standard weight decay
($\ell_2$-regularization) and there is no need for dropout. Despite its
simplicity, our architecture reaches $6.38\%$ top-$1$ classification error on
the CIFAR10 benchmark (with standard data augmentation). This is competitive
with the best residual network reported in~\cite{he15deepresidual}, which
achieved $6.43\%$. Moreover, it improves upon the performance of the previous
best \emph{all-convolutional} network, $7.25\%$, achieved
by~\cite{2014arXiv1412.6806S}. Unlike ours, this previous all-convolutional
architecture additionally required dropout and a non-standard preprocessing
(ZCA) of the entire data set. Our architecture also improves
significantly upon~\cite{2014arXiv1412.6806S} on both Cifar100 and ImageNet.

\subsection{Related Work}
\label{sec:related}
Since the advent of residual networks~(\cite{he15deepresidual}; \cite{DBLP:conf/eccv/HeZRS16}), most state-of-the-art networks for image classification have adopted a residual parameterization of the convolutional layers. Further impressive improvements were reported by \cite{DBLP:journals/corr/HuangLW16a} with a variant of residual networks, called \emph{dense nets}. Rather than adding the original input to the output of a convolutional layer, these networks preserve the original features directly by concatenation. In doing so, dense nets are also able to easily encode an identity embedding in a higher-dimensional space. It would be interesting to see if our theoretical results also apply to this variant of residual networks.

There has been recent progress on understanding the optimization landscape of neural networks, though a comprehensive answer remains elusive. Experiments in~
\cite{2014arXiv1412.6544G} and~\cite{dauphin2014identifying} suggest that the training objectives have a limited number of bad local minima with large function values. Work by~\cite{choromanska2015loss} draws an analogy between the optimization landscape of neural nets and that of the spin glass model in physics (\cite{auffinger2013random}).   \cite{2016arXiv160508361S} showed that  $2$-layer neural networks have no bad \textit{differentiable} local minima, but they didn't prove that a good differentiable local minimum does exist. \cite{Baldi:1989:NNP:70359.70362} and \cite{Kawaguchi} show that linear neural networks have no bad local minima. In contrast, we show that the optimization landscape of deep linear residual networks has no bad \textit{critical} point, which is a stronger and more desirable property. Our proof is also notably simpler illustrating the power of re-parametrization for optimization. Our results also indicate that deeper networks may have more desirable optimization landscapes compared with shallower ones. 

\section{Optimization landscape of linear residual networks} \label{sec:criticalpoints}

\newcommand{\copt}{C_{\textup{opt}}}

Consider the problem of learning a linear transformation 
$R\colon \R^d\to \R^d$ from noisy measurements $y=Rx + \xi,$
where $\xi\in \N(0,\Id_d)$ is a $d$-dimensional spherical
Gaussian vector. Denoting by $\cD$ the distribution of the input data~$x,$ let
$\Sigma  = \Exp_{x\sim \cD}[xx^{\top}]$ be its covariance matrix. 

There are, of course, many ways to solve this classical problem, but our goal is
to gain insights into the optimization landscape of neural nets, and in particular,
residual networks. We therefore parameterize our learned model
by a sequence of weight matrices $A_1,\dots, A_{\ell}\in \R^{d\times d}$, 
\begin{align}
h_0 & =  x \mcom\qquad
h_{j}  = h_{j-1} + A_{j} h_{j-1} \mcom\qquad
\hat{y} = h_{\ell} \mper
\end{align}
Here $h_{1},\dots, h_{\ell-1}$ are the $\ell-1$ hidden layers and $\hat{y} =
h_{\ell}$ are the predictions of the learned model on input~$x.$ 
More succinctly, we have 
\begin{align}
\hat{y} = (\Id + A_{\ell})\dots (\Id + A_{1}) x\mper \nonumber
\end{align}
It is easy to see that this model can express any linear transformation~$R.$
We will use $A$ as a shorthand for all of the weight matrices, that is, the
$\ell\times d\times d$-dimensional tensor that contains $A_1,\dots, A_{\ell}$
as slices. Our objective function is the maximum likelihood estimator, 
\begin{align}
f(A, (x,y)) = \norm{\hat{y}-y}^2 = \norm{ (\Id + A_\ell)\dots (\Id + A_{1}) x- Rx -\xi}^2 \mper\label{eqn:objective-single-exp}
\end{align}
We will analyze the landscape of the \emph{population risk}, defined as, 
\begin{align}
f(A) := \Exp\left[f(A,(x,y))\right] \mper \nonumber
\end{align}

Recall that $\norm{A_i}$ is the spectral norm of $A_i$. We define the norm $\gnorm{\cdot}$ for the tensor $A$ as the maximum of the spectral norms of its slices, 
\begin{align}
\gnorm{A} := \max_{1\le i\le \ell} \norm{A_i}\mper\nonumber
\end{align}
The first theorem of this section states that the objective function $f$ has
an optimal solution with small $\gnorm{\cdot}$-norm, which is
\textit{inversely} proportional to the number of layers $\ell$. Thus,  when
the architecture is deep, we can shoot for fairly small norm solutions. We
define $\gamma := \max\{|\log \sigma_{\max}(R)|, |\log \sigma_{\min}(R)|\}$.
Here $\sigma_{\min}(\cdot), \sigma_{\max}(\cdot)$ denote the least and largest
singular values of $R$ respectively. 
\begin{theorem}\label{thm:existence}
Suppose $\ell \ge 3\gamma$ and $\det(R) > 0$. Then, there exists a global optimum solution $A^{\star}$ of the population risk $f(\cdot)$ with norm $$\gnorm{A^{\star}}\le (4\pi + 3\gamma)/\ell\mper$$  
\end{theorem}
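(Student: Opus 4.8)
The plan is to reduce the claim to a statement about the matrix logarithm of $R$ and a geometric fact about eigenvalues of real matrices with positive determinant. First I would observe that the global minimum of the population risk is $0$ if and only if the product $(\Id+A_\ell)\cdots(\Id+A_1)$ equals $R$ exactly; this follows because $f(A)=\Exp\norm{((\Id+A_\ell)\cdots(\Id+A_1)-R)x-\xi}^2 = \trace\big(((\Id+A_\ell)\cdots(\Id+A_1)-R)\Sigma((\Id+A_\ell)\cdots(\Id+A_1)-R)^\top\big) + d$, which is minimized (over all of $A$, since the model is fully expressive) precisely when the product is $R$. So it suffices to exhibit matrices $A_1,\dots,A_\ell$ with $\gnorm{A}\le (4\pi+3\gamma)/\ell$ whose product of $(\Id+A_i)$ is $R$.

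The natural choice is $A_i = \exp(L/\ell) - \Id$ for a fixed matrix $L$ with $\exp(L)=R$, so that $(\Id+A_\ell)\cdots(\Id+A_1)=\exp(L)=R$ automatically. Then I would need two things: (i) that such a real logarithm $L$ of $R$ exists when $\det(R)>0$, and (ii) a bound on $\norm{\exp(L/\ell)-\Id}$ in terms of $\norm{L}$. For (ii), writing $\exp(L/\ell)-\Id = \sum_{k\ge 1} (L/\ell)^k/k!$, one gets $\norm{\exp(L/\ell)-\Id}\le e^{\norm{L}/\ell}-1 \le (\norm{L}/\ell)e^{\norm{L}/\ell}$, which is $O(\norm{L}/\ell)$ once $\norm{L}\le \ell$; so the whole game is to control $\norm{L}$, and this is where the constants $4\pi$ and $3\gamma$ come from. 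The point is that $L$ can be taken with singular values (or rather, a norm) bounded by roughly $\gamma$ plus a term accounting for the rotational/imaginary part of $\log R$, which is where the $4\pi$ (a bound related to $2\pi$ per complex-conjugate eigenvalue pair, i.e. the argument of a complex eigenvalue) enters.

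The main obstacle is constructing a well-controlled real logarithm $L$ of $R$. A general real matrix with positive determinant need not have a real logarithm if it has negative real eigenvalues — but negative eigenvalues of a real matrix with positive determinant come in pairs (with even total multiplicity), and a pair $-\lambda,-\lambda$ (or a Jordan block structure over them) can be realized as $\exp$ of a real matrix that looks like a rotation by $\pi$ scaled appropriately, contributing a bounded amount to $\norm{L}$. Concretely I would put $R$ in real Schur / real Jordan form, handle each $1\times 1$ positive block by an ordinary real log (contributing $\le \gamma$ to the norm), and each $2\times 2$ block — coming from a complex-conjugate eigenvalue pair $\rho e^{\pm i\theta}$, $\theta\in(-\pi,\pi]$, or from a negative real pair — by an explicit real log of the form $(\log\rho)\Id + \theta J$ where $J$ is the standard skew-symmetric $2\times 2$ matrix; here $|\log\rho|\le\gamma$ and $|\theta|\le\pi$, so this block contributes at most $\gamma + \pi$, and accounting for possible Jordan-block nilpotent parts and the operator-norm-vs-blockwise bookkeeping yields the stated $4\pi + 3\gamma$. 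The condition $\ell\ge 3\gamma$ is exactly what is needed so that $\norm{L}/\ell$ is small enough (bounded by a constant) for the Taylor bound in (ii) to give the clean $O(1/\ell)$ conclusion with the claimed constant.
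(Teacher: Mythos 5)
Your high-level plan is sound and the reduction in your first paragraph is correct: minimizing the population risk over a fully expressive factored model is equivalent to requiring $(\Id+A_\ell)\cdots(\Id+A_1)=R$, so the whole content is exhibiting such a factorization with small $\gnorm{\cdot}$-norm. Your route, however, differs from the paper's and has a genuine gap in the key quantitative step. The paper does \emph{not} take a matrix logarithm of $R$ directly; it takes the singular value decomposition $R=UKV^{\top}$, block-diagonalizes each orthogonal factor by an \emph{orthogonal} similarity (so every factor involved is normal), decomposes each $2\times 2$ rotation block into $q$ rotations by angle $\theta/q$, handles the $\pm 1$ signs by grouping them into $T(\pi)$ blocks, and separately factors the diagonal $K$ as $(K^{1/p})^p$. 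The reason for this detour is precisely the step your proposal glosses over.

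The gap is your claim that the real logarithm $L$ of $R$ can be taken with $\norm{L}$ of order $\gamma+\pi$. This is false when $R$ is not normal. The definition $\gamma=\max\{|\log\sigma_{\max}(R)|,|\log\sigma_{\min}(R)|\}$ controls singular values, while the real Jordan / real Schur construction you describe needs a similarity transform that is not orthogonal, and the operator norm of the resulting logarithm is multiplied by the condition number of that transform, which the hypotheses do not bound. Concretely, take
\begin{align}
R_M=\begin{pmatrix}1 & M \\ 0 & 1\end{pmatrix},\qquad \log R_M=\begin{pmatrix}0 & M \\ 0 & 0\end{pmatrix}.\nonumber
\end{align}
Here $\sigma_{\max}(R_M)\approx M$ and $\sigma_{\min}(R_M)\approx 1/M$, so $\gamma\approx\log M$, but $\norm{\log R_M}=M\approx e^{\gamma}$, exponentially larger than $\gamma$. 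So $A_i=\exp(L/\ell)-\Id$ would have norm of order $e^{\gamma}/\ell$, not $(4\pi+3\gamma)/\ell$, and no amount of ``operator-norm-vs-blockwise bookkeeping'' fixes this, since the ill-conditioning is intrinsic. The paper's SVD route sidesteps the issue entirely: $U$, $V$, and $K$ are each normal, so each admits a well-controlled factorization into near-identity pieces (rotations of angle $\le\pi/q$ for the orthogonal parts, $\ell$-th roots of the positive diagonal entries with $|\log k_i|\le\gamma$ for $K$), and the composite uses $4q+p=\ell$ layers with norm $\max\{\pi/q,3\gamma/p\}$. Your approach would be correct and clean for symmetric or normal $R$ (essentially the ``easy case'' sketched in the appendix), but to handle general $R$ you need to first separate the left and right orthogonal factors from the scaling, as the paper does.
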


We first note that the condition $\det(R) > 0$ is without loss of generality in the following sense. Given any linear transformation $R$ with negative determinant, we can effectively flip the determinant by augmenting the data and the label with an additional dimension: let $x' = [x, b]$ and $y' = [y,-b]$ , where $b$ is an independent random variable (say, from standard normal distribution),  and let $R' = \begin{bmatrix}
R & 0 \\
0 & -1
\end{bmatrix}$. Then, we have that $y' = R'x' + \xi$ and $\det(R') = -\det(R) > 0$.\footnote{When the dimension is odd, there is an easier way to see this -- flipping the label corresponds to flipping $R$, and we have $\det(-R) = -\det(R)$.}

Second, we note that here $\gamma$ should be thought of as a constant since if $R$ is too large (or
too small), we can scale the data properly so that $\sigma_{\min}(R)\le 1\le
\sigma_{\max}(R)$. Concretely, if $\sigma_{\max}(R)/\sigma_{\min}(R) =
\kappa$, then we can scaling for the outputs properly so that
$\sigma_{\min}(R) = 1/\sqrt{\kappa}$ and $\sigma_{\max}(R) = \sqrt{\kappa}$.
In this case, we have $\gamma = \log \sqrt{\kappa}$, which will remain a small
constant for fairly large condition number $\kappa$. We also point out that we
made no attempt to optimize the constant factors here in the analysis. The
proof of Theorem~\ref{thm:existence} is rather involved and is deferred to Section~\ref{sec:proof}. 

Given the observation of Theorem~\ref{thm:existence}, we restrict our attention to analyzing the landscape of $f(\cdot)$ in the set of $A$ with $\gnorm{\cdot}$-norm less than $\tau$,  
\begin{align}
\cB_{\tau} = \{A\in \R^{\ell\times d\times d}: \gnorm{A}\le \tau\}\mper\nonumber
\end{align}

Here using Theorem~\ref{thm:existence}, the radius $\tau$ should be thought of as on the order of $1/\ell$. Our main theorem in this section claims that there is no bad critical point in the domain $\cB_{\tau}$ for any $\tau < 1$. Recall that a critical point has vanishing gradient. 

\begin{theorem}\label{thm:main}
	For any $\tau < 1$, we have that any critical point $A$ of the objective function $f(\cdot)$ inside the 
	domain $\cB_{\tau}$ must also be a global minimum. 
\end{theorem}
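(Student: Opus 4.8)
The plan is to reduce the whole statement to one clean identity for $f$ together with the invertibility of each residual block. First I would expand the population risk: setting $\Pi := (\Id+A_\ell)\cdots(\Id+A_1)$ and $E := \Pi - R$, and using that $\xi$ has mean zero and is independent of $x$ (so the cross term vanishes), we get
\begin{align}
f(A) = \Exp\norm{(\Pi-R)x}^2 + \Exp\norm{\xi}^2 = \trace\!\big(E\,\Sigma\, E^\t\big) + d\mper \nonumber
\end{align}
Since $\Sigma$ is positive semidefinite, $\trace(E\,\Sigma\,E^\t)\ge 0$, so $f(A)\ge d$ for every $A$, and the value $d$ is attained (e.g. by $A_1 = R-\Id$, $A_2=\dots=A_\ell=0$, which gives $\Pi = R$). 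Hence the set of global minima of $f$ is exactly $\set{A : E\,\Sigma = 0}$ — equivalently, the $A$ for which $\Pi$ agrees with $R$ on $\range(\Sigma)$ — and it suffices to show that every critical point $A\in\cB_\tau$ satisfies $E\,\Sigma = 0$.

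Next I would differentiate with respect to a single layer, say the top one. Writing $\Pi = (\Id+A_\ell)\,\Pi_{<\ell}$ with $\Pi_{<\ell} := (\Id+A_{\ell-1})\cdots(\Id+A_1)$, the dependence of $E$ on $A_\ell$ is affine, and a one-line matrix computation from $f = \trace(E\,\Sigma\, E^\t)+d$ yields
\begin{align}
\nabla_{A_\ell} f(A) = 2\, E\,\Sigma\,\Pi_{<\ell}^{\t}\mper \nonumber
\end{align}
At a critical point this is $0$. This is exactly where $\tau<1$ enters: each $\norm{A_i}\le\tau<1$ makes $\Id+A_i$ invertible (via the Neumann series), so $\Pi_{<\ell}$ is invertible; multiplying $\nabla_{A_\ell}f(A)=0$ on the right by $(\Pi_{<\ell}^{\t})^{-1}$ gives $E\,\Sigma=0$. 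Substituting back, $f(A)=\trace(E\,\Sigma\, E^\t)+d=d$, so $A$ is a global minimum.

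There is no genuinely hard step — the content of the theorem is that the identity reparameterization trivializes the argument — but two points deserve care. First, pinning down the minimum value of $f$ and its minimizer set uses only that $\Pi$ ranges over all of $\R^{d\times d}$; unlike Theorem~\ref{thm:existence}, no assumption on $\det(R)$ or the conditioning of $R$ is needed here. Second, the invertibility of $\Pi_{<\ell}$ is the whole game: in the standard parameterization $A_\ell\cdots A_1$ the analogous factor $A_{\ell-1}\cdots A_1$ can be singular (e.g. at $A_i=0$, a spurious critical point), and the one-line deduction collapses. Finally, nothing privileges the top layer: one may differentiate in any $A_i$, obtaining $\nabla_{A_i}f(A) = 2\,\big((\Id+A_\ell)\cdots(\Id+A_{i+1})\big)^{\t} E\,\Sigma\,\big((\Id+A_{i-1})\cdots(\Id+A_1)\big)^{\t}$, and invertibility of both flanking products again forces $E\,\Sigma=0$ — so stationarity in any single layer already proves the theorem.
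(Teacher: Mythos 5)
Your proof is correct and rests on the same core observation as the paper's: the gradient $\nabla_{A_i} f$ is the matrix $E\Sigma$ flanked by products of factors $(\Id + A_j^\top)$, each of which is invertible because $\norm{A_j}\le\tau<1$, so a vanishing gradient forces $E\Sigma=0$, which by Claim~\ref{claim:1} means $f(A)=\copt$. The only difference is one of presentation: you cancel the flanking factors by multiplying by their inverses, while the paper runs the identical observation quantitatively through $\sigma_{\min}(\Id+A_j)\ge 1-\tau$ and Claim~\ref{claim:sigmamin}, which additionally yields the gradient-dominance inequality~\eqref{eqn:10} as a byproduct.
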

Theorem~\ref{thm:main} suggests that it is sufficient for the optimizer to converge to critical points of the population risk, since all the critical points are also global minima. 

Moreover, in addition to Theorem~\ref{thm:main}, we also have that any $A$ inside the domain $\cB_{\tau}$ satisfies that 
\begin{align}
\Norm{\nabla f(A)}_F^2  \ge 4\ell(1-\tau)^{2\ell-2} \sigma_{\min}(\Sigma) (f(A)- \copt) \mper\label{eqn:10}
\end{align}
Here $\copt$ is the global minimal value of $f(\cdot)$ and $\norm{\nabla f(A)}_F$ denotes the euclidean norm\footnote{That is, $\norm{T}_F := \sqrt{\sum_{ijk}T_{ijk}^2}$. } of the $\ell\times d\times d$-dimensional tensor $\nabla f(A)$. Note that $\sigma_{\min}(\Sigma)$ denote the minimum singular value of $\Sigma$. 

Equation~\eqref{eqn:10} says that the gradient has fairly large norm compared
to the error, which guarantees convergence of the gradient descent to a global
minimum (\cite{2016arXiv160804636K}) if the iterates stay inside the domain
$\cB_{\tau},$ which is not guaranteed by Theorem~\ref{thm:main} by itself.

Towards proving Theorem~\ref{thm:main}, we start off with a simple claim that
simplifies the population risk. We use $\norm{\cdot}_F$ to denote the Frobenius
norm of a matrix, and $\langle A,B\rangle$ denotes the inner product of $A$ and $B$ in the standard basis (that is, $\langle A,B\rangle = \trace(A^\top B)$ where $\trace(\cdot)$ denotes the trace of a matrix.)

\begin{claim}\label{claim:1}
	In the setting of this section, we have, 
	\begin{align}
	f(A) = \Norm{((\Id+A_{\ell})\dots (\Id+A_{1})- R)\Sigma^{1/2}}_F^2 + C\mper\label{eqn:obj}
	\end{align}
	Here $C$ is a constant that doesn't depend on $A$, and $\Sigma^{1/2}$ denote the square root of $\Sigma$, that is, the unique symmetric matrix $B$ that satisfies $B^2 = \Sigma$. 
\end{claim}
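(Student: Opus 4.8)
\textbf{Proof proposal for Claim~\ref{claim:1}.} The plan is to expand the squared norm defining $f(A,(x,y))$ and take the expectation term by term. Write $B := (\Id+A_{\ell})\cdots(\Id+A_1)$ for brevity, so that $\hat y = Bx$ and $\hat y - y = (B-R)x - \xi$. Then
\begin{align}
f(A,(x,y)) = \norm{(B-R)x - \xi}^2 = \norm{(B-R)x}^2 - 2\inner{(B-R)x,\,\xi} + \norm{\xi}^2\mper\nonumber
\end{align}
Now take the expectation over $x\sim\cD$ and $\xi\sim\N(0,\Id_d)$, which are independent.

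The first step is to handle the cross term: since $x$ and $\xi$ are independent and $\Exp[\xi]=0$, we get $\Exp\inner{(B-R)x,\xi} = \inner{(B-R)\Exp[x],\Exp[\xi]} = 0$. The second step is the noise term, which contributes the constant $\Exp\norm{\xi}^2 = d$; this is the constant $C$ in the statement, and it visibly does not depend on $A$. The third and only substantive step is to rewrite $\Exp\norm{(B-R)x}^2$: using the cyclic property of the trace, $\Exp\norm{(B-R)x}^2 = \Exp\,\trace\!\big((B-R)^\top(B-R)\,xx^\top\big) = \trace\!\big((B-R)^\top(B-R)\,\Sigma\big)$, where we used linearity of expectation and $\Sigma = \Exp[xx^\top]$. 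Finally, since $\Sigma$ is symmetric positive semidefinite, its square root $\Sigma^{1/2}$ is well-defined, and $\trace\!\big((B-R)^\top(B-R)\Sigma\big) = \trace\!\big(\Sigma^{1/2}(B-R)^\top(B-R)\Sigma^{1/2}\big) = \norm{(B-R)\Sigma^{1/2}}_F^2$. Substituting $B = (\Id+A_\ell)\cdots(\Id+A_1)$ back in yields~\eqref{eqn:obj} with $C = d$.

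There is no real obstacle here: the argument is a routine second-moment computation, and the only things to be careful about are (i) invoking independence of $x$ and $\xi$ to kill the cross term, and (ii) justifying that $\Sigma^{1/2}$ exists and commutes appropriately under the trace, both of which follow from $\Sigma\succeq 0$. It may be worth remarking that the identity holds for any input distribution $\cD$ with finite second moments, not just Gaussian $x$, since only $\Sigma$ enters; the Gaussian assumption is used elsewhere (for the measurement noise $\xi$) but not in this claim beyond $\Exp[\xi]=0$ and $\Exp\norm{\xi}^2<\infty$.
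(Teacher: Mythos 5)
Your proof is correct and matches the paper's argument essentially step for step: expand the squared norm, kill the cross term using that $\xi$ has mean zero and is independent of $x$ (the paper phrases this via $\Exp[\xi\mid x]=0$, but it is the same fact), and convert $\Exp\,\trace$ into a trace against $\Sigma$ to get the Frobenius-norm form. Your additional observations — that $C=d$ explicitly and that only second moments of $x$ are used — are accurate but do not change the route.
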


\begin{proof}[Proof of Claim~\ref{claim:1}] Let $\trace(A)$ denotes the trace of the matrix $A$. 
	Let $E = (\Id + A_\ell)\dots (\Id + A_{1}) - R$. 
	Recalling the definition of $f(A)$ and using equation~\eqref{eqn:objective-single-exp}, we have
	\begin{align}
	f(A) & = \Exp\left[\norm{ Ex -\xi}^2 \right] \tag{by equation~\eqref{eqn:objective-single-exp} }\\
	& =\Exp\left[\norm{Ex}^2 + \norm{\xi}^2 - 2\inner{Ex,\xi}\right] \nonumber\\
	& = \Exp\left[\trace(Exx^\top E^{\top})\right] +  \Exp\left[\norm{\xi}^2\right] \tag{since $\Exp\left[\inner{Ex,\xi}\right] = \Exp\left[\inner{Ex,\Exp\left[\xi\vert x\right]}\right]=0$}\\
	& = \trace\left(E\Exp\left[xx^{\top}\right]E^{\top}\right) + C \tag{where $C= \Exp[\xi^2]$}\\
	& = \trace(E\Sigma E^{\top}) + C = \norm{E\Sigma^{1/2}}_F^2 + C\mper\tag{since $\Exp\left[xx^{\top}\right] = \Sigma$}
	\end{align}
\end{proof}

Next we compute the gradients of the objective function $f(\cdot)$ from straightforward matrix calculus. We defer the full proof to Section~\ref{sec:proof}. 
\begin{lemma}\label{lem:gradients} 	The gradients of $f(\cdot)$ can be written as, 
\begin{align}
\partialgrad{f}{A_i} = 2(\Id+ A_{i+1}^{\top})\dots (\Id + A_{\ell}^{\top})E \Sigma(\Id + A_{1}^{\top})\dots (\Id + A_{i-1}^{\top})\mcom\label{eqn:grads}
\end{align}
where 
$E = (\Id+A_\ell)\dots (\Id+A_{1})- R$\mper
\end{lemma}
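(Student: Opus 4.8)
The plan is to derive \eqref{eqn:grads} by a direct matrix-calculus computation starting from the simplified form of the population risk in Claim~\ref{claim:1}. By Claim~\ref{claim:1} we have $f(A) = \trace(E\Sigma E^\top) + C$ with $E = (\Id + A_\ell)\cdots(\Id + A_1) - R$ and $C$ independent of $A$, so it suffices to differentiate the map $E \mapsto \trace(E\Sigma E^\top)$ and then apply the chain rule through $A_i \mapsto E$. The only real work is careful bookkeeping of the order of the non-commuting factors and of the transposes.

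First I would isolate the dependence on a single slice. Fix $i$ and write $P_i := (\Id + A_\ell)\cdots(\Id + A_{i+1})$ and $Q_i := (\Id + A_{i-1})\cdots(\Id + A_1)$, with the convention that $P_\ell = Q_1 = \Id$ (empty products), so that $E = P_i(\Id + A_i)Q_i - R$. Perturbing $A_i \mapsto A_i + \Delta$ while holding the other slices fixed, $E$ changes to first order by $P_i\,\Delta\, Q_i$, i.e. $dE = P_i\,\Delta\, Q_i$.

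Next I would differentiate. Because $\Sigma^\top = \Sigma$ and $\trace(M) = \trace(M^\top)$, the differential of $E \mapsto \trace(E\Sigma E^\top)$ at $E$ in direction $dE$ is $2\,\trace(dE\,\Sigma E^\top)$. Substituting $dE = P_i\,\Delta\, Q_i$ and using cyclicity of the trace gives $df = 2\,\trace(\Delta\, Q_i\Sigma E^\top P_i)$. Comparing with $df = \inner{\partialgrad{f}{A_i},\Delta} = \trace\big((\partialgrad{f}{A_i})^\top \Delta\big)$ yields $\partialgrad{f}{A_i} = 2(Q_i\Sigma E^\top P_i)^\top = 2P_i^\top E\Sigma Q_i^\top$, again using $\Sigma^\top = \Sigma$. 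Expanding $P_i^\top = (\Id + A_{i+1}^\top)\cdots(\Id + A_\ell^\top)$ and $Q_i^\top = (\Id + A_1^\top)\cdots(\Id + A_{i-1}^\top)$ reproduces \eqref{eqn:grads} exactly, with the boundary cases $i=1$ and $i=\ell$ handled automatically by the empty-product convention.

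I do not anticipate a genuine obstacle: there is no analytic content here, and the "hard part" is purely the notational discipline of tracking the product order and the single global transpose when reading the gradient off the differential. A quick sanity check at $\ell = 1$ (where the formula should give $\partialgrad{f}{A_1} = 2E\Sigma = 2((\Id+A_1) - R)\Sigma$) confirms the bookkeeping.
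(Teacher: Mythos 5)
Your proof is correct and is essentially the paper's own argument: both start from the simplified objective of Claim~\ref{claim:1}, perturb a single slice $A_i \mapsto A_i + \Delta$, and read the gradient off the linear term in $\Delta$ via the trace/inner-product identification, with transposes tracked by the adjoint rule. The only cosmetic differences are that you work with $\trace(E\Sigma E^\top)$ rather than $\|E\Sigma^{1/2}\|_F^2$ and package the flanking products as $P_i,Q_i$; and incidentally your final formula fixes a transposition-order typo that appears in the very last line of the paper's proof.
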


Now we are ready to prove Theorem~\ref{thm:main}. The key observation is that
each matric $A_j$ has small norm and cannot cancel the identity
matrix. Therefore, the gradients in equation~\eqref{eqn:grads} is a product
of non-zero matrices, except for the error matrix $E$. Therefore, if
the gradient vanishes, then the only possibility is that the matrix~$E$
vanishes, which in turns implies $A$ is an optimal solution. 

\begin{proof}[Proof of Theorem~\ref{thm:main}]
Using Lemma~\ref{lem:gradients}, we have, 
	\begin{align}
	\Norm{\partialgrad{f}{A_i}}_F & = 2\Norm{ (\Id+ A_{i+1}^{\top})\dots (\Id + A_{\ell}^{\top})E\Sigma (\Id + A_{1}^{\top})\dots (\Id + A_{i-1}^{\top}) }_F \tag{by Lemma~\ref{lem:gradients}}\\
	& \ge 2\prod_{j\neq i} \sigma_{\min}(\Id+A_i^{\top})\cdot \sigma_{\min}(\Sigma)\norm{E}_F \tag{by Claim~\ref{claim:sigmamin}} \\
	& \ge 2(1-\tau)^{\ell-1} \sigma_{\min}(\Sigma^{1/2})\norm{E\Sigma^{1/2}}_F\mper \tag{since $\sigma_{\min}(\Id +A )\ge 1-\norm{A}$}
	\end{align}
	It follows that 
	\begin{align}
	\Norm{\nabla f(A)}_F^2 & = \sum_{i=1}^{\ell}\Norm{\partialgrad{f}{A_i}}_F^2  \ge 4\ell (1-\tau)^{2(\ell-1)} \sigma_{\min}(\Sigma)\norm{E\Sigma^{1/2}}_F^2 \nonumber\\
	 & = 4\ell(1-\tau)^{2(\ell-1)} \sigma_{\min}(\Sigma)(f(A)-C) \tag{by the definition of $E$ and Claim~\ref{claim:1}}\\
	 & \ge 4\ell(1-\tau)^{2(\ell-1)} \sigma_{\min}(\Sigma)(f(A)-\copt)\mper\tag{since $\copt :=\min_A f(A)\ge C$ by Claim~\ref{claim:1}}\nonumber
	\end{align}
	Therefore we complete the proof of equation~\eqref{eqn:10}. Finally,  if $A$ is a critical point, namely, $\nabla f(A) =0$, then by equation~\eqref{eqn:10}  we have that $f(A) = \copt$. That is, $A$ is a global minimum. 
\end{proof}

\section{Representational Power of the Residual Networks}
\label{sec:representation}

\newcommand{\ReLU}{ReLU~}

In this section we characterize the finite-sample expressivity of residual
networks.
We consider a residual layers with a single \ReLU activation and no batch
normalization.  The basic residual building block is a function $\cT_{U,V,
s}(\cdot): \R^k\rightarrow \R^k$ that is parameterized by two weight matrices
$U\in \R^{k \times k},V\in \R^{k\times k}$ and a bias vector $s\in \R^k$, 
\begin{align}
\cT_{U,V, s}(h) = V\relu(Uh + s)\mper \label{eqn:block}
\end{align}
A residual network is composed of a sequence of such residual blocks.
In comparison with the full pre-activation architecture
in~\cite{DBLP:conf/eccv/HeZRS16}, we remove two batch normalization layers and
one \ReLU layer in each building block. 

We assume the data has $r$ labels, encoded as~$r$ standard basis vectors in
$\R^r$, denoted by $e_1,\dots, e_r$. We have $n$ training examples $(x\pp{1},
y\pp{1}),\dots, (x\pp{n},y\pp{n})$, where $x\pp{i}\in \R^d$ denotes the $i$-th
data and $y\pp{i}\in \{e_1,\dots, e_r\}$ denotes the $i$-th label. Without
loss of generality we assume the data are normalized so that $x\pp{i}=1.$
We also make the mild assumption that no two data points are very close to each
other. 
\begin{assumption}\label{ass:data}
We assume that for every $1\le i < j \le n$, we have $\norm{x\pp{i}-x\pp{j}}^
2 \ge \rho $ for some absolute constant $\rho > 0.$
\end{assumption}
Images, for example, can always be imperceptibly perturbed in pixel
space so as to satisfy this assumption for a small but constant~$\rho.$

Under this mild assumption, we prove that residual networks have the
power to express any possible labeling of the data as long as the number of 
parameters is a logarithmic factor larger than~$n$.  

\begin{theorem}\label{thm:representation}
Suppose the training examples satisfy Assumption~\ref{ass:data}.  Then,
there exists a residual network $N$ (specified below) with $O(n\log n +r^2)$
parameters that perfectly expresses the training data, i.e., for all
$i\in\{1,\dots,n\},$ the network $N$ maps $x\pp{i}$ to $y\pp{i}.$
\end{theorem}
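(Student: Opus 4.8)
The strategy is to split the network into two stages: a first stage that collapses the $n$ data points to $n$ \emph{distinct} real numbers, and a second stage that builds, one residual block at a time, a piecewise-linear ``lookup'' function on the line. \textbf{Stage 1 (reduction to the line).} Using $t=\relu(t)-\relu(-t)$, a residual block can apply any fixed linear functional $x\mapsto\inner{v,x}$ and deposit the result in a fresh coordinate (costing $O(d)$ parameters, the implicit $d$-dependence). I would take $v$ to be a random unit vector: for a pair $i\ne j$, the scalar $\inner{v,x\pp{i}-x\pp{j}}$ is a one-dimensional marginal of a vector of norm $\ge\sqrt\rho$ by Assumption~\ref{ass:data}, and such a marginal is anti-concentrated, so $\Pr[\,|\inner{v,x\pp{i}-x\pp{j}}|\le\delta\,]=O(\delta\sqrt{d/\rho})$; a union bound over the $\binom n2$ pairs gives, for $\delta$ of order $\sqrt{\rho/d}/n^2$, a direction $v$ with $\min_{i\ne j}|\inner{v,x\pp{i}}-\inner{v,x\pp{j}}|\ge\delta$. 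After relabelling, write the images as $q_1<q_2<\dots<q_n$, with all gaps at least $\delta$. (Downstream weights will be polynomially large in $1/\delta$, which is harmless since the theorem bounds no weight magnitudes.)

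\textbf{Stage 2 (incremental interpolation).} It now suffices to build a residual network whose output, as a function of the scalar coordinate $q$ from Stage~1 (carried unchanged through every layer in a dedicated coordinate), equals the target label at each $q_i$; its values between the $q_i$ are irrelevant. I would build this function additively, sweeping left to right and maintaining a small ``register'' of coordinates. Suppose after some blocks the register holds the desired value at $q_1,\dots,q_m$; incorporating $q_{m+1}$ costs one \relu unit: add $c\cdot\relu(q-q_m)$ to the register, with vector $c$ chosen so the value at $q_{m+1}$ is corrected. The key is causality: $\relu(q-q_m)$ vanishes for every $q\le q_m$, so no already-fixed point is disturbed. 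A residual block of width $w$ supplies $w$ such units at once; the units in a block all see the same pre-block input $q$, but the correction coefficients are computed offline, accounting for the earlier corrections within the same block. Letting the register carry the $\lceil\log_2 r\rceil$-bit binary code of the label gives $w=\Theta(\log r)$, hence $\lceil n/w\rceil$ blocks and $O(n\log r)$ parameters. A final residual block decodes a code $b$ to a one-hot vector: its $j$-th output coordinate is $\relu(1-\|b-\mathrm{bin}(j)\|_1)$, which is $1$ exactly when $b=\mathrm{bin}(j)$ and $0$ otherwise, and is a single \relu unit because $\|b-\mathrm{bin}(j)\|_1=\inner{\allones-2\,\mathrm{bin}(j),\,b}+\|\mathrm{bin}(j)\|_1$ is linear in $b$. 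That block has $O(r^2)$ parameters, so in total $O(d)+O(n\log r)+O(r^2)\subseteq O(n\log n+r^2)$, as claimed.

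\textbf{Main obstacle.} The ideas are elementary; the content is in the bookkeeping. Two points need care: (i) turning the anti-concentration estimate into a single direction that separates all $\binom n2$ pairs with an explicit uniform gap $\delta$; and (ii) checking that the incremental scheme truly respects the fixed-width residual format -- that $q$ and the register survive intact through every block, and that batching $w$ corrections into a single block (whose \relu units all read the common pre-block input) still yields \emph{exact} interpolation at the $q_i$. Neither is deep, but (ii) is where an argument is most likely to slip, so I would write out the per-block update and verify by induction that the register is exactly right at $q_1,\dots,q_m$ after the $m$-th correction.
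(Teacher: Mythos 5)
Your argument is correct, and it takes a genuinely different route from the paper. The paper projects the $n$ inputs to $k = O(\log n/\rho^2)$ dimensions via a random Johnson--Lindenstrauss map $A_0$ so that all pairwise distances are approximately preserved; it then proves a building-block lemma (Lemma~\ref{lem:building_block}) in which each residual layer sends exactly $k$ of the current hidden vectors to randomly chosen surrogate label vectors $q_1,\dots,q_r\in\R^k$ while fixing the rest, the mechanism being that setting $U$'s rows to the selected $\alpha^{(i)}$'s and biasing by $-(1-2\rho')$ isolates exactly one firing ReLU per selected input (the squared-norm diagonal dominates the cross inner products by the separation hypothesis); a final non-residual block applies the same lemma to map $q_j\mapsto e_j$. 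You instead collapse to \emph{one} dimension by a random direction together with an anti-concentration/union-bound argument, then replace the paper's ``cluster to surrogate labels'' mechanism with a left-to-right sweep of one-sided ramps $\relu(q-q_m)$ and a lower-triangular solve within each block; your decode step matches a $\lceil\log_2 r\rceil$-bit binary code rather than $r$ random directions. Both yield $O(n\log n+r^2)$; your hidden width is $\Theta(\log r)$ rather than $\Theta(\log n)$ (slightly tighter when $r\ll n$), and your interpolation on the line is very explicit, whereas the paper's building block is more symmetric and avoids any ordering of the data. The trade-off is that your construction requires weights of magnitude $\poly(1/\delta)\sim\poly(n,\sqrt{d/\rho})$ from the $1$-D anti-concentration step, while the paper's weights are bounded by $\poly(1/\rho')$ independently of $n$; you also need the extra probabilistic ingredient of anti-concentration on the sphere, which the paper does not use (it only invokes the distortion side of JL). One small clean-up worth making: the initial projection computing $q=\langle v,x\rangle$ is most naturally implemented by the paper's non-residual first layer $A_0$ (an $O(d\log r)$-parameter linear map) rather than as a residual block, since a residual block preserves the ambient dimension $d$ and would make every subsequent block cost $\Omega(d^2)$ parameters.
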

It is common in practice that $n>r^2,$ as is for example the case for the
Imagenet data set where $n>10^6$ and $r=1000.$

We construct the following residual net using the building blocks of the form $\cT_{U,V,s}$ as defined in equation~\eqref{eqn:block}. The network consists of $\ell+1$ hidden layers $h_0,\dots, h_{\ell}$, and the output is denoted by $\hat{y}\in \R^r$. The first layer of weights matrices $A_0$ maps the $d$-dimensional input to a $k$-dimensional hidden variable $h_0$. Then we apply $\ell$ layers of building block $\cT$ with weight matrices $A_{j},B_j\in \R^{k\times k}$. Finally, we apply another layer to map the hidden variable $h_{\ell}$ to the label $\hat{y}$ in $\R^k$. Mathematically, we have
\begin{align}
h_{0} & = A_0x\mcom  \nonumber\\
h_{j} & = h_{j-1}+ \cT_{A_j, B_j, b_j}(h_{j-1}), \quad \forall j\in \{1,\dots,\ell\}\nonumber\\
\hat{y} & = \cT_{A_{\ell+1},B_{\ell+1}, s_{\ell+1}}(h_{\ell}) \mper\nonumber
\end{align} 
We note that here $A_{\ell+1}\in \R^{k\times r}$ and $B_{\ell+1}\in \R^{r\times r}$ so that the dimension is compatible. We assume the number of labels $r$ and the input dimension $d$ are both smaller than $n$, which is safely true in practical applications.\footnote{In computer vision, typically $r$ is less than $10^3$ and $d$ is less than $10^5$ while $n$ is larger than $10^6$} The hyperparameter $k$ will be chosen to be $O(\log n)$ and the number of layers is chosen to be $\ell = \lceil n/k\rceil $. Thus, the first layer has $dk$ parameters, and each of the middle $\ell$ building blocks contains $2k^2$ parameters and the final building block has $kr+r^2$ parameters. Hence, the total number of parameters is $O(k d+ \ell k^2 + rk + r^2) = O(n\log n + r^2)$. 

Towards constructing the network $N$ of the form above that fits the data, we first take a random matrix $A_0\in \R^{k\times d}$ that maps all the data points $x\pp{i}$ to vectors $h_0\pp{i} := A_0 x\pp{i}$. Here we will use $h_j\pp{i}$ to denote the $j$-th layer of hidden variable of the $i$-th example. 
By Johnson-Lindenstrauss Theorem (\cite{johnson1984extensions}, or see ~\cite{wiki:JL}), with good probability, the resulting vectors $h_0\pp{i}$'s remain to satisfy Assumption~\ref{ass:data} (with slightly different scaling and larger constant $\rho$), that is, any two vectors $h_0\pp{i}$ and $h_0\pp{j}$ are not very correlated. 

Then we construct $\ell$ middle layers that maps $h_0\pp{i}$ to $h_{\ell}\pp{i}$ for every $i \in \{1,\dots, n\}$. These vectors $h_{\ell}\pp{i}$ will  clustered into $r$ groups according to the labels, though they are in the $\R^k$ instead of in $\R^r$ as desired. Concretely, we design this cluster centers by picking $r$ random unit vectors $q_1, \dots, q_r$ in $\R^k$. We view them as the surrogate label vectors in dimension $k$ (note that $k$ is potentially much smaller than $r$). In high dimensions (technically, if $k  > 4\log r$) random unit vectors $q_1,\dots, q_r$ are pair-wise uncorrelated with inner product less than $<0.5$.  We associate the $i$-th example with the target  surrogate label vector $v\pp{i}$  defined as follows, 
\begin{align}
\textup{if } y\pp{i} = e_j,  \textup{ then } v\pp{i} = q_j \mper \label{eqn:def-v}\end{align}

Then we will construct the matrices $(A_1,B_1),\dots, (A_{\ell},B_{\ell})$ such that the first $\ell$ layers of the network maps vector $h_0\pp{i}$ to the surrogate label vector $v\pp{i}$. Mathematically, we will construct $(A_1,B_1),\dots, (A_{\ell},B_{\ell})$ such that 
\begin{align}
\forall i \in \{1,\dots, n\}, h_{\ell}\pp{i} = v\pp{i}\mper\label{eqn:llayer}
\end{align}

Finally we will construct the last layer $\cT_{A_{\ell+1},B_{\ell+1},b_{\ell+1}}$ so that it maps the vectors $q_1,\dots, q_r\in \R^k$ to $e_1,\dots, e_r\in \R^r$, 
\begin{align}
\forall j\in \{1,\dots, r\}, \cT_{A_{\ell+1},B_{\ell+1},b_{\ell+1}}(q_j) = e_j\mper\label{eqn:lastlayer}
\end{align}Putting these together, we have that by  the definition~\eqref{eqn:def-v} and equation~\eqref{eqn:llayer},  for every $i$, if the label is $y\pp{i}$ is $e_j$, then $h_\ell\pp{i}$ will be $q_j$. Then by equation~\eqref{eqn:lastlayer}, we have that $\hat{y}\pp{i} =  \cT_{A_{\ell+1},B_{\ell+1},b_{\ell+1}}(q_j) = e_j$. Hence we obtain that $\hat{y}\pp{i} =y\pp{i} $.   
The key part of this plan is the construction of the middle $\ell$ layers of weight matrices so that $h_{\ell}\pp{i} = v\pp{i}$. We encapsulate this into the following informal lemma. The formal statement and the full proof is deferred to Section~\ref{sec:proof:representation}.

\begin{lemma}[Informal version of Lemma~\ref{lem:induction}]\label{lem:induction_informal}
	In the setting above, for (almost) arbitrary vectors $h_0\pp{1},\dots, h_0\pp{n}$ and $v\pp{1},\dots,v\pp{n}\in \{q_1,\dots, q_r\}$,  there exists weights matrices $(A_1,B_1),\dots, (A_{\ell},B_{\ell})$, such that, 
	\begin{align}
\forall i\in \{1,\dots, n\}	, ~~~ h_{\ell}\pp{i} = v\pp{i}\mper\nonumber
	\end{align}
\end{lemma}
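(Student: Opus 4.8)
The plan is to process the $n$ examples in $\ell=\lceil n/k\rceil$ consecutive batches $S_1,\dots,S_\ell$, each of size at most $k$, and to use the $j$-th residual block to ``take care of'' the batch $S_j$. Concretely, I would maintain the inductive invariant that after block $j$ every example in $S_1\cup\dots\cup S_j$ already sits at its surrogate label $v\pp{i}$, while every remaining example still sits at its initial position $h_0\pp{i}$. The reason the residual-plus-ReLU parameterization $h\mapsto h+B\,\relu(Ah+b)$ (the block of \eqref{eqn:block}) is so convenient is that it lets one block move a prescribed small set of points to arbitrary new locations while acting as the \emph{exact} identity on every other point the network currently holds: it suffices to arrange that every ReLU neuron in the block outputs $0$ on all points we do not want to touch, since then the residual update is $B\cdot 0 = 0$ there.

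For the $j$-th block I would assign one neuron to each $i\in S_j$ (unused neurons, when $|S_j|<k$, are set to zero). The neuron for $i$ takes the row $h_0\pp{i}$ of $A_j$, so its pre-activation at a vector $h$ is $\langle h_0\pp{i},h\rangle+s_i$, and I choose the bias $s_i:=\beta-\|h_0\pp{i}\|^2$ for a small constant $\beta>0$ (say $\beta$ a small multiple of $\rho$), so that the pre-activation equals exactly $\beta$ at $h=h_0\pp{i}$. The key point is that this neuron outputs $0$ on every \emph{other} point currently held by the network: (i) at another initial point $h_0\pp{i'}$, $i'\ne i$, the separation Assumption~\ref{ass:data} — which the random projection $A_0$ preserves up to a constant by Johnson–Lindenstrauss, along with the norms $\|h_0\pp{i}\|\approx 1$ — gives $\langle h_0\pp{i},h_0\pp{i'}\rangle \le \|h_0\pp{i}\|^2-\beta$ once $\beta$ is a small enough multiple of $\rho$, so the pre-activation is $\le 0$; and (ii) at an already-placed point, which sits at some label vector $q_m$, the fact that $q_1,\dots,q_r$ are fresh random unit vectors in $\R^k$ with $k=O(\log n)$ gives, by a union bound over the $O(nr)$ pairs, $|\langle h_0\pp{i},q_m\rangle|\le 1/2<\|h_0\pp{i}\|^2-\beta$ for all $i,m$ with high probability, so again the pre-activation is $\le 0$. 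Hence the neuron fires (with value exactly $\beta$) only at $h_0\pp{i}$. Finally I set the corresponding column of $B_j$ to $(v\pp{i}-h_0\pp{i})/\beta$, so the block maps $h_0\pp{i}\mapsto h_0\pp{i}+\beta\cdot(v\pp{i}-h_0\pp{i})/\beta = v\pp{i}$, and leaves every other held point fixed.

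Given this gadget the induction is immediate: block $j$ moves exactly the points of $S_j$ to their surrogate labels and fixes all others, so after all $\ell$ blocks we get $h_\ell\pp{i}=v\pp{i}$ for every $i$, which is the assertion (and matches \eqref{eqn:llayer}). The phrase ``(almost) arbitrary'' in the statement refers precisely to the two properties used above: the $h_0\pp{i}$ are pairwise $\sqrt\rho$-separated (inherited from Assumption~\ref{ass:data} through $A_0$), and they are incoherent with the random label vectors $q_m$; both hold with high probability over the choice of $A_0$ and of $q_1,\dots,q_r$. The step I expect to be the main obstacle is the simultaneous bookkeeping of the ``non-firing'' inequalities: for every block $j$ and every neuron in it one must check that its pre-activation is non-positive on all $O(n+r)$ vectors the network could be holding at that stage — the not-yet-processed initial points, the already-placed centers $q_m$, and the other members of $S_j$ — and that the single margin $\beta$ can be taken to beat all of these slacks at once. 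Lining up the $\rho$-versus-incoherence-versus-$\beta$ constants, and verifying that $k=O(\log n)$ really suffices for the incoherence bound (hence for the claimed $O(n\log n+r^2)$ parameter count), is where the real work lies; the rest is routine linear-algebra bookkeeping, carried out in full in Section~\ref{sec:proof:representation}.
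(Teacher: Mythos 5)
Your proposal is correct and takes essentially the same approach as the paper's Lemma~\ref{lem:building_block} and Lemma~\ref{lem:induction}: batch the examples in groups of $k$, assign one ReLU neuron per example with weight row equal to its current (initial) position, choose a bias that makes the neuron fire only at that point (using separation for not-yet-processed points and incoherence with the random $q_m$ for already-placed ones), and set the matching column of $V$ to shift the point to $v\pp{i}$. The only cosmetic difference is that you absorb the normalization into a per-neuron bias $s_i = \beta - \|h_0\pp{i}\|^2$ so that the firing margin is exactly $\beta$, whereas the paper uses a uniform bias $-(1-2\rho')\mathbf{1}$ and rescales each column of $V$ by $\|\alpha\pp{i}\|^2-(1-2\rho')$ instead.
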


We briefly sketch the proof of the Lemma to provide intuitions, and defer the full proof to Section~\ref{sec:proof:representation}. The operation that each residual block applies to the hidden variable can be abstractly written as, 
\begin{align}
\hat{h} \rightarrow h + \cT_{U,V,s}(h)\mper\label{eqn:op}
\end{align}
where $h$ corresponds to the hidden variable before the block and $\hat{h}$ corresponds to that after. We claim that 
for an (almost) arbitrary sequence of vectors $h\pp{1},\dots,h\pp{n}$, there exists $\cT_{U,V,s}(\cdot)$ such that operation~\eqref{eqn:op} transforms $k$ vectors of $h\pp{i}$'s to an arbitrary set of other $k$ vectors that we can freely choose, and maintain the value of the rest of $n-k$ vectors. Concretely, for any subset $S$ of size $k$, and any desired vector $v\pp{i}  (i\in S)$, there exist $U,V,s$ such that 
\begin{align}
v\pp{i} & = h\pp{i} + \cT_{U,V,s}(h\pp{i})~~\forall i\in S \nonumber\\
h\pp{i} & = h\pp{i} + \cT_{U,V,s}(h\pp{i})~~ \forall i\not\in S \label{eqn:14}
\end{align}
This claim is formalized in Lemma~\ref{lem:building_block}. We can use it repeatedly to construct $\ell$ layers of building blocks, each of which transforms a subset of $k$ vectors in $\{h_0\pp{1},\dots, h_0\pp{n}\}$ to the corresponding vectors in $\{v\pp{1},\dots, v\pp{n}\}$, and maintains the values of the others. Recall that we have $\ell = \lceil n/k\rceil$ layers and therefore after $\ell$ layers,  all the vectors $h_0\pp{i}$'s are transformed to $v\pp{i}$'s, which complete the proof sketch. \qed

\section{Power of all-convolutional residual networks}

Inspired by our theory, we experimented with all-convolutional residual
networks on standard image classification benchmarks. 

\subsection{CIFAR10 and CIFAR100}
Our architectures for CIFAR10 and CIFAR100 are identical except for the final
dimension corresponding to the number of classes $10$ and $100$, respectively. In
Table~\ref{table:cifar10-architecture}, we outline our architecture. Each
\emph{residual block} has the form $x+C_2(\mathrm{ReLU}(C_1x)),$ where
$C_1,C_2$ are convolutions of the specified dimension (kernel width, kernel
height, number of input channels, number of output channels). The second
convolution in each block always has stride~$1$, while the first may have
stride~$2$ where indicated. In cases where transformation is not
dimensionality-preserving, the original input $x$ is adjusted using averaging
pooling and padding as is standard in residual layers.

We trained our models with the Tensorflow framework, using a momentum
optimizer with momentum $0.9,$ and batch size is~$128$.  All convolutional
weights are trained with weight decay~$0.0001.$ 
The initial learning rate is $0.05,$ which drops by a factor $10$ and $30000$
and $50000$ steps.  The model reaches peak performance at
around $50k$ steps, which takes about $24h$  on a single NVIDIA Tesla K40 GPU.
Our code can be easily derived from an open source
implementation\footnote{\url{https://github.com/tensorflow/models/tree/master/resnet}}
by removing batch normalization, adjusting the residual components and model
architecture. An important departure from the code is that we initialize a
residual convolutional layer of kernel size $k\times k$ and $c$ output channels using a
random normal initializer of standard deviation $\sigma=1/k^2c,$ rather than
$1/k\sqrt{c}$ used for standard convolutional layers. This substantially
smaller weight initialization helped training, while not affecting
representation.

A notable difference from standard models is that the last layer is not
trained, but simply a fixed random projection. On the one hand, this slightly
improved test error (perhaps due to a regularizing effect). On the other hand,
it means that the only trainable weights in our model are those of the
convolutions, making our architecture ``all-convolutional''. 

\begin{table}[ht]
\caption{Architecture for CIFAR10/100 ($55$ convolutions, $13.5$M parameters)}
\centering
\begin{tabular}{|l|c|l|}
\hline
variable dimensions & initial stride & description \\
\hline
$3\times 3\times 3\times 16$ &  $1$ & 1 standard conv \\
$3\times 3\times 16\times 64$ &  $1$ & 9 residual blocks \\
$3\times 3\times 64\times 128$ &  $2$ & 9 residual blocks \\
$3\times 3\times 128\times 256$ &  $2$ & 9 residual blocks \\
\hline
-- & -- & $8\times 8$ global average pool \\
$256\times\mathtt{num\_classes}$ & -- & random projection (not trained)\\
\hline
\end{tabular}
\label{table:cifar10-architecture}
\end{table}

An interesting aspect of our model is that despite its massive size of $13.59$
million trainable parameters, the model does not seem to overfit too quickly
even though the data set size is $50000.$ In contrast, we found it difficult
to train a model with batch normalization of this size without significant
overfitting on CIFAR10.

\begin{figure}
\includegraphics[width=0.49\textwidth]{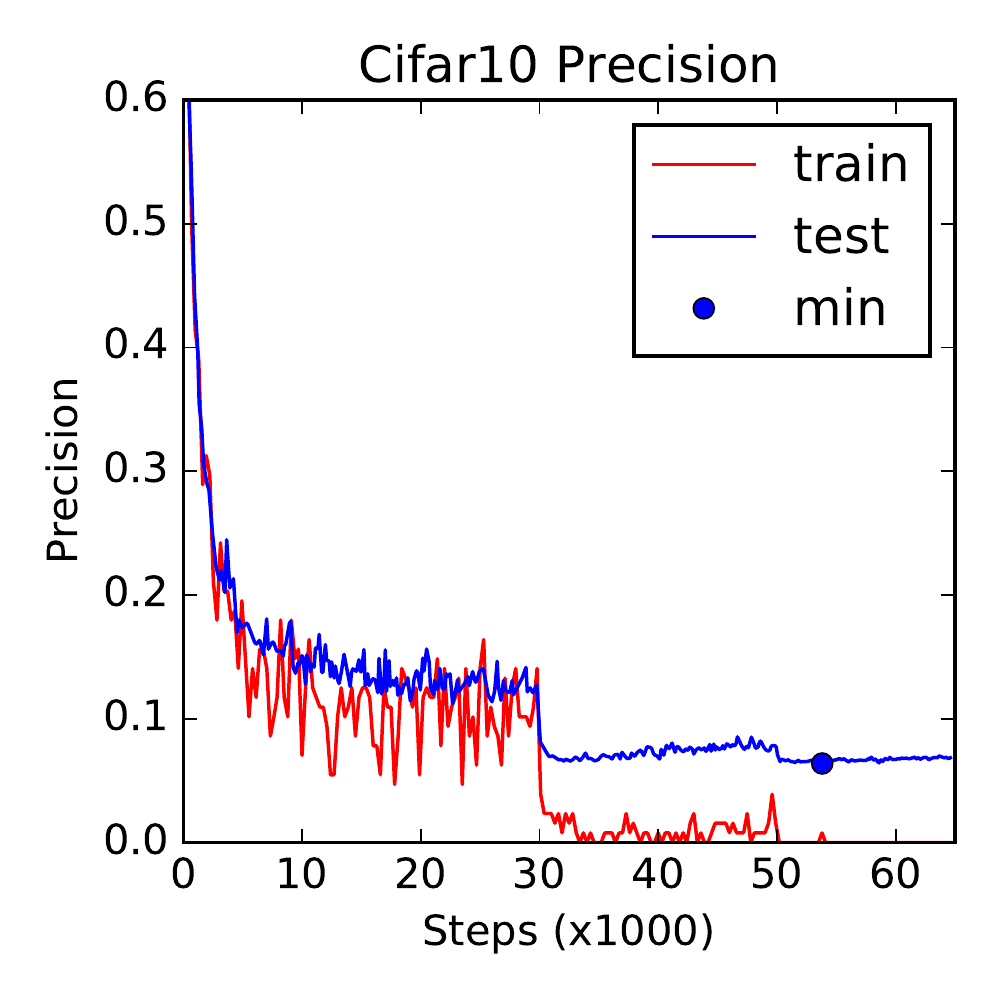}
\includegraphics[width=0.49\textwidth]{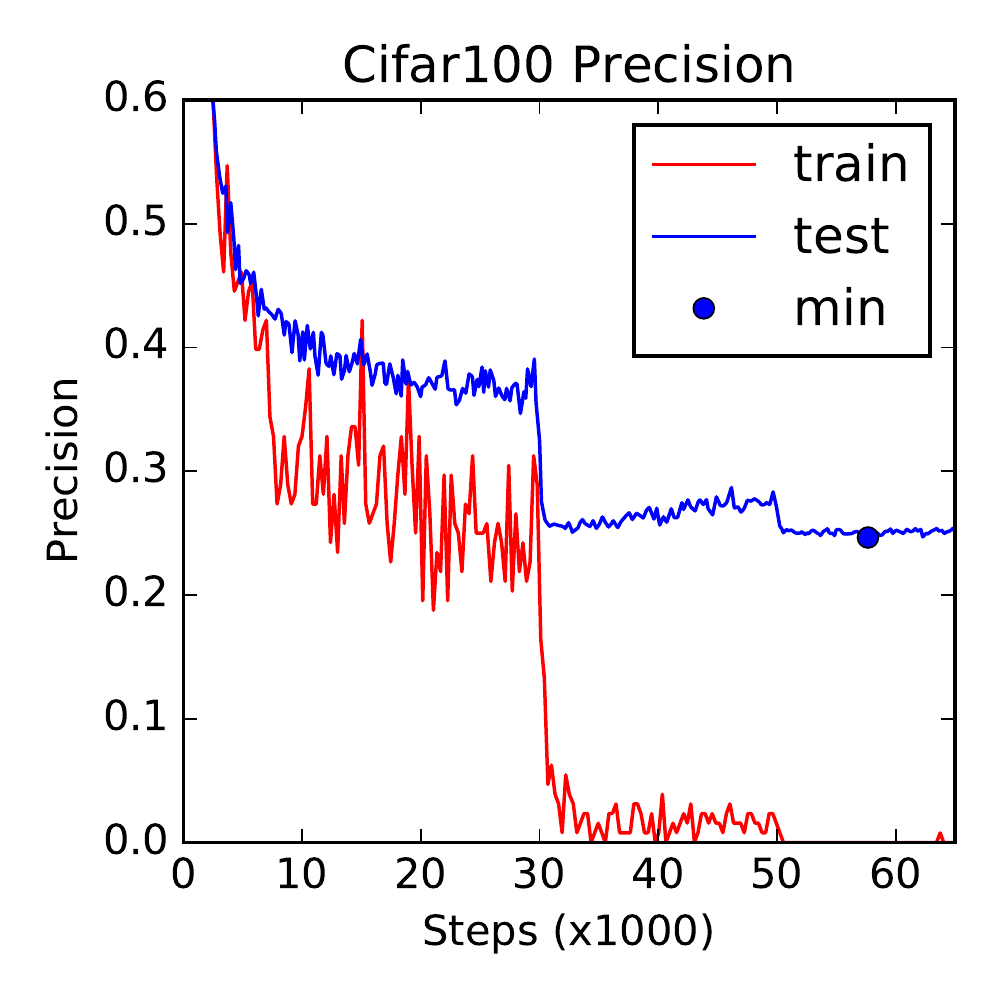}
\caption{Convergence plots of best model for CIFAR10 (left) and CIFAR (100)
right. One step is a gradient update with batch size $128$.}
\end{figure}

Table~\ref{table:comparison} summarizes the top-$1$ classification error of our models
compared with a non-exhaustive list of previous works, restricted
to the best previous all-convolutional result by \cite{2014arXiv1412.6806S}, the first residual
results~\cite{he15deepresidual}, and state-of-the-art results on CIFAR by
\cite{DBLP:journals/corr/HuangLW16a}.
All results are with standard data augmentation.

\begin{table}[ht]
\caption{Comparison of top-$1$ classification error on different benchmarks}
\centering
\begin{tabular}{|l|c|c|c|p{37mm}|}
\hline
Method & CIFAR10 & CIFAR100 & ImageNet & remarks\\
\hline
All-CNN & $7.25$ & $32.39$ & $41.2$ & all-convolutional, dropout \newline extra data processing\\
\hline
Ours & $6.38$ & $24.64$ & $35.29$ & all-convolutional\\
\hline
ResNet & $6.43$ & $25.16$ & $19.38$ & \\
\hline 
DenseNet & $3.74$ & $19.25$ & N/A & \\
\hline  
\end{tabular}
\label{table:comparison}
\end{table}

\subsection{ImageNet}

The ImageNet ILSVRC 2012 data set has $1,281,167$ data points with $1000$
classes. Each image is resized to $224\times 224$ pixels with $3$ channels.
We experimented with an all-convolutional variant of the
$34$-layer network in~\cite{he15deepresidual}. The original model achieved
$25.03\%$ classification error. 
Our derived model has $35.7M$ trainable parameters.
We trained the model with a momentum optimizer (with
momentum $0.9$) and a learning rate schedule that decays by a factor of $0.94$
every two epochs, starting from the initial learning rate~$0.1.$ Training was
distributed across $6$ machines updating asynchronously. Each machine was
equipped with $8$ GPUs (NVIDIA Tesla K40) and used batch size~$256$ 
split across the $8$ GPUs so that each GPU updated with batches of size~$32.$

In contrast to the situation with CIFAR10 and CIFAR100, on ImageNet our
all-convolutional model performed significantly worse than its original
counterpart. Specifically, we experienced a significant amount of
\emph{underfitting} suggesting that a larger model would likely perform
better.

Despite this issue, our model still reached~$35.29\%$ top-$1$ classification
error on the test set ($50000$ data points), and $14.17\%$ top-$5$ test error
after $700,000$ steps (about one week of training).  While no longer
state-of-the-art, this performance is significantly better than the $40.7\%$
reported by \cite{krizhevsky2012imagenet}, as well as the best
all-convolutional architecture by \cite{2014arXiv1412.6806S}. We believe it is
quite likely that a better learning rate schedule and hyperparameter settings
of our model could substantially improve on the preliminary performance
reported here.

\section{Conclusion}

Our theory underlines the importance of identity parameterizations when
training deep artificial neural networks. An outstanding open problem is to
extend our optimization result to the non-linear case where each residual has
a single ReLU activiation as in our expressivity result. We conjecture that a
result analogous to Theorem~\ref{thm:main} is true for the general non-linear
case. Unlike with the standard parameterization, we see no fundamental
obstacle for such a result.

We hope our theory and experiments together help simplify the state of deep
learning by aiming to explain its success with a few fundamental principles,
rather than a multitude of tricks that need to be delicately combined. We
believe that much of the advances in image recognition can be achieved with
residual convolutional layers and ReLU activations alone. This could lead to
extremely simple (albeit deep) architectures that match the state-of-the-art
on all image classification benchmarks.

\vspace{0.2in}
\noindent{\bf Acknowledgment:} We thank Jason D. Lee, Qixing Huang, and Jonathan Shewchuk for helpful discussions and kindly pointing out errors in earlier versions of the paper. We also thank Jonathan Shewchuk for suggesting an improvement of equation~\eqref{eqn:10} that is incorporated into the current version. Tengyu Ma would like to thank the support by Dodds Fellowship and Siebel Scholarship. 

\bibliography{deep_learning}

\begin{thebibliography}{10}

\bibitem{auffinger2013random}
Antonio Auffinger, G{\'e}rard~Ben Arous, and Ji{\v{r}}{\'\i} {\v{C}}ern{\`y}.
\newblock Random matrices and complexity of spin glasses.
\newblock {\em Communications on Pure and Applied Mathematics}, 66(2):165--201,
  2013.

\bibitem{Baldi:1989:NNP:70359.70362}
P.~Baldi and K.~Hornik.
\newblock Neural networks and principal component analysis: Learning from
  examples without local minima.
\newblock {\em Neural Netw.}, 2(1):53--58, January 1989.

\bibitem{choromanska2015loss}
Anna Choromanska, Mikael Henaff, Michael Mathieu, G{\'e}rard~Ben Arous, and
  Yann LeCun.
\newblock The loss surfaces of multilayer networks.
\newblock In {\em AISTATS}, 2015.

\bibitem{dauphin2014identifying}
Yann~N Dauphin, Razvan Pascanu, Caglar Gulcehre, Kyunghyun Cho, Surya Ganguli,
  and Yoshua Bengio.
\newblock Identifying and attacking the saddle point problem in
  high-dimensional non-convex optimization.
\newblock In {\em Advances in neural information processing systems}, pages
  2933--2941, 2014.

\bibitem{2014arXiv1412.6544G}
I.~J. {Goodfellow}, O.~{Vinyals}, and A.~M. {Saxe}.
\newblock {Qualitatively characterizing neural network optimization problems}.
\newblock {\em ArXiv e-prints}, December 2014.

\bibitem{he15deepresidual}
Kaiming He, Xiangyu Zhang, Shaoqing Ren, and Jian Sun.
\newblock Deep residual learning for image recognition.
\newblock In {\em arXiv prepring arXiv:1506.01497}, 2015.

\bibitem{DBLP:conf/eccv/HeZRS16}
Kaiming He, Xiangyu Zhang, Shaoqing Ren, and Jian Sun.
\newblock Identity mappings in deep residual networks.
\newblock In {\em Computer Vision - {ECCV} 2016 - 14th European Conference,
  Amsterdam, The Netherlands, October 11-14, 2016, Proceedings, Part {IV}},
  pages 630--645, 2016.

\bibitem{DBLP:journals/corr/HuangLW16a}
Gao Huang, Zhuang Liu, and Kilian~Q. Weinberger.
\newblock Densely connected convolutional networks.
\newblock {\em CoRR}, abs/1608.06993, 2016.

\bibitem{DBLP:conf/icml/IoffeS15}
Sergey Ioffe and Christian Szegedy.
\newblock Batch normalization: Accelerating deep network training by reducing
  internal covariate shift.
\newblock In {\em Proceedings of the 32nd International Conference on Machine
  Learning, {ICML} 2015, Lille, France, 6-11 July 2015}, pages 448--456, 2015.

\bibitem{johnson1984extensions}
William~B Johnson and Joram Lindenstrauss.
\newblock Extensions of lipschitz mappings into a hilbert space.
\newblock {\em Contemporary mathematics}, 26(189-206):1, 1984.

\bibitem{2016arXiv160804636K}
H.~{Karimi}, J.~{Nutini}, and M.~{Schmidt}.
\newblock {Linear Convergence of Gradient and Proximal-Gradient Methods Under
  the Polyak-$\backslash$L$\{$$\}$ojasiewicz Condition}.
\newblock {\em ArXiv e-prints}, August 2016.

\bibitem{Kawaguchi}
K.~{Kawaguchi}.
\newblock {Deep Learning without Poor Local Minima}.
\newblock {\em ArXiv e-prints}, May 2016.

\bibitem{krizhevsky2012imagenet}
Alex Krizhevsky, Ilya Sutskever, and Geoffrey~E Hinton.
\newblock Imagenet classification with deep convolutional neural networks.
\newblock In {\em Advances in neural information processing systems}, pages
  1097--1105, 2012.

\bibitem{2016arXiv160508361S}
D.~{Soudry} and Y.~{Carmon}.
\newblock {No bad local minima: Data independent training error guarantees for
  multilayer neural networks}.
\newblock {\em ArXiv e-prints}, May 2016.

\bibitem{2014arXiv1412.6806S}
J.~T. {Springenberg}, A.~{Dosovitskiy}, T.~{Brox}, and M.~{Riedmiller}.
\newblock {Striving for Simplicity: The All Convolutional Net}.
\newblock {\em ArXiv e-prints}, December 2014.

\bibitem{normalmatrix}
Eric~W. Weisstein.
\newblock Normal matrix, from mathworld--a wolfram web resource., 2016.

\bibitem{wiki:JL}
Wikipedia.
\newblock Johnson–lindenstrauss lemma --- wikipedia{,} the free encyclopedia,
  2016.

\end{thebibliography}
\appendix
\section{Missing Proofs in Section~\ref{sec:criticalpoints}}\label{sec:proof}

In this section, we give the complete proofs for Theorem~\ref{thm:existence} and Lemma~\ref{lem:gradients}, which are omitted in Section~\ref{sec:criticalpoints}. 
\subsection{Proof of Theorem~\ref{thm:existence}}
It turns out the proof will be significantly easier if $R$ is assumed to be a \textit{symmetric positive semidefinite} (PSD) matrix, or if we allow the variables to be complex matrices. Here we first give a proof sketch for the first special case. The readers can skip it and jumps to the full proof below. We will also prove stronger results, namely, $\gnorm{A^{\star}}\le 3\gamma/\ell$, for the special case.

When $R$ is PSD, it can be diagonalized by orthonormal matrix $U$ in the sense that $R = UZU^{\top}$, where $Z =\diag(z_1,\dots, z_d)$ is a diagonal matrix with non-negative diagonal entries $z_1,\dots, z_d$. Let $A^{\star}_{1}=\dots = A^{\star}_{\ell} = U\diag(z_i^{1/{\ell}})U^{\top} - \Id$, then we have 
\begin{align}
(\Id+A^{\star}_{\ell})\cdots (\Id+A^{\star}_{1})  & = (U\diag(z_i^{1/\ell})U^{\top})^{\ell}  = U\diag(z_i^{1/{\ell}})^{\ell} U \tag{since $U^{\top}U = \Id$} \\
& = UZU^{\top} = R\mper \nonumber
\end{align} 

We see that the network defined by $A^{\star}$ reconstruct the transformation $R$, and therefore it's a global minimum of the population risk (formally see Claim~\ref{claim:1} below). Next, we verify that each of the $A^{\star}_j$ has small spectral norm: 
\begin{align}
\norm{A^{\star}_j} & = \norm{\Id-U\diag(z_i^{1/\ell})U^{\top})} = \norm{U(\Id-\diag(z_i)^{1/\ell})U^{\top}} = \norm{\Id-\diag(z_i)^{1/\ell}} \tag{since $U$ is orthonormal}\\
& = \max_i |z_i^{1/\ell}-1|\mper \label{eqn:15}
\end{align}
Since $\sigma_{\min}(R)\le z_i\le \sigma_{\max}(R)$, we have $\ell \ge 3\gamma\ge |\log z_i|$. It follows that 
\begin{align}
|z_i^{1/\ell} - 1| = |e^{(\log z_i)/\ell} - 1| \le 3|(\log z_i)/\ell|\le 3\gamma/\ell \mper\tag{since $|e^{x}-1|\le 3|x|$ for all $|x|\le 1$}
\end{align}
Then using equation~\eqref{eqn:15} and the equation above,  we have that $\gnorm{A}\le \max_j \norm{A^{\star}_j}\le 3\gamma/\ell$, which  completes the proof for the special case. 

Towards fully proving the Theorem~\ref{thm:existence}, we start with the following Claim: 

\begin{claim}\label{claim:2d}
	Suppose $Q\in \R^{2\times 2}$ is an orthonormal matrix. Then for any integer $q$, there exists matrix $W_1,\dots, W_q\in \R^{2\times 2}$ and a diagonal matrix $\Lambda$ satisfies that (a) $Q = W_1\dots W_q\Lambda$ and $\norm{W_j-\Id}\le \pi/q$, (b) $\Lambda$ is an diagonal matrix with $\pm 1$ on the diagonal, and (c) If $Q$ is a rotation then $\Lambda = \Id$. 
\end{claim}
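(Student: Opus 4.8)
The plan is to reduce the claim to the elementary classification of $2\times 2$ orthogonal matrices: every such matrix is either a planar rotation (when $\det Q = 1$) or a reflection (when $\det Q = -1$), and a rotation by angle $\theta$ is exactly the $q$-th power of the rotation by $\theta/q$. I will treat the two cases separately, deriving the reflection case from the rotation case.

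First I would handle the rotation case. If $\det Q = 1$, write $Q = R_\theta := \left(\begin{smallmatrix}\cos\theta & -\sin\theta\\ \sin\theta & \cos\theta\end{smallmatrix}\right)$ for a unique $\theta\in(-\pi,\pi]$, and set $\Lambda = \Id$ and $W_1 = \dots = W_q = R_{\theta/q}$. Since $R_\alpha R_\beta = R_{\alpha+\beta}$, we get $W_1\cdots W_q = R_\theta = Q$, which gives (a) and (c), while (b) is immediate. For the norm bound, a direct computation gives $(R_\alpha - \Id)^{\t}(R_\alpha-\Id) = (2 - 2\cos\alpha)\,\Id$, hence $\norm{R_\alpha - \Id} = \sqrt{2 - 2\cos\alpha} = 2\lvert\sin(\alpha/2)\rvert \le \lvert\alpha\rvert$. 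Taking $\alpha = \theta/q$ and using $\lvert\theta\rvert\le\pi$ yields $\norm{W_j - \Id}\le \lvert\theta\rvert/q \le \pi/q$.

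Next I would reduce the reflection case to the rotation case. If $\det Q = -1$, set $\Lambda = \diag(1,-1)$, which is orthogonal with $\det\Lambda = -1$ and $\Lambda^2 = \Id$. Then $Q\Lambda$ is orthogonal with $\det(Q\Lambda) = 1$, so by the rotation case there exist $W_1,\dots,W_q$ with $\norm{W_j-\Id}\le\pi/q$ and $W_1\cdots W_q = Q\Lambda$. Multiplying on the right by $\Lambda$ and using $\Lambda^2 = \Id$ gives $Q = W_1\cdots W_q\,\Lambda$, establishing (a) and (b); condition (c) is vacuous since $Q$ is not a rotation. Combining the two cases finishes the proof.

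I do not anticipate a genuine obstacle here: the only substantive computation is the identity $\norm{R_\alpha - \Id} = 2\lvert\sin(\alpha/2)\rvert$ combined with $\lvert\sin t\rvert\le\lvert t\rvert$, and the single point needing a little care is choosing the representative angle $\theta$ in $(-\pi,\pi]$ — so that $\lvert\theta/q\rvert\le\pi/q$ — rather than in $[0,2\pi)$, which would only give the weaker bound $2\pi/q$.
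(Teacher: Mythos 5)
Your proof is correct and takes essentially the same route as the paper: decompose $Q$ as a rotation times a $\pm 1$ diagonal matrix, then split the rotation into $q$ equal small rotations. You are in fact slightly more careful than the paper on two points the paper glosses over — you derive $\norm{R_\alpha-\Id}=2\lvert\sin(\alpha/2)\rvert\le\lvert\alpha\rvert$ explicitly, and you note that the rotation angle must be taken in $(-\pi,\pi]$ (not $[0,2\pi)$) for the bound $\pi/q$ to hold — both of which are implicitly assumed but not spelled out in the paper's proof.
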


\begin{proof}
	We first consider the case when $Q$ is a rotation. 	Each rotation matrix can be written as $T(\theta) := \begin{bmatrix}
	\cos \theta & -\sin \theta \\
	\sin \theta & \cos\theta
	\end{bmatrix}$. Suppose $Q = T(\theta)$. Then we can take $W_1=\dots = W_q = T(\theta/q)$ and $\Lambda = \Id$. 
	We can verify that 
	\begin{align}
	\norm{W_j - \Id} \le \theta/q. \nonumber\end{align}
	
	Next, we consider the case when $Q$ is a reflection. Then we have that $Q$ can be written as $Q =  T(\theta)\cdot \diag(-1,1) $,  where $\diag(-1,1)$ is the reflection with respect to the $y$-axis. Then we can take $W_1=\dots = W_q = T(\theta/q)$ and $\Lambda =  \diag(-1,1)$ and complete the proof. 
\end{proof}
Next we give the formal full proof of Theorem~\ref{thm:existence}. The main idea is to reduce to the block diagonal situation and to apply the Claim above.  

\begin{proof}[Proof of Theorem~\ref{thm:existence}]			
	Let $R= UKV^{\top}$ be the singular value decomposition of $R$, where $U$,$V$ are two orthonormal matrices and $K$ is a diagonal matrix with nonnegative entries on the diagonal. Since $\det(R) = \det(U)\det(K)\det(V)> 0$ and $\det(K)  > 0$, we can flip $U,V$ properly so that $\det(U) =\det(V) =1 $. Since $U$ is a normal matrix (that is, $U$ satisfies that $UU^{\top} = U^{\top}U$), by Claim~\ref{cliam:block-diagonal}, we have that $U$ can be block-diagonalized by orthonormal matrix $S$ into $U = SDS^{-1}$, where $D = \diag(D_1,\dots, D_{m})$ is a real block diagonal matrix with each block $D_i$ being of size at most $2\times 2$. Using Claim~\ref{claim:2d}, we have that for any $D_i$, there exists $W_{i,1},\dots, W_{i,q}, \Lambda_i$ such that \begin{align}D_i = W_{i,1}\dots W_{i,q} \Lambda_i\label{eqn:100}\end{align} and $\norm{W_{i,j}-\Id}\le \pi/q$. 
	Let  $\Lambda = \diag(\Lambda_1,\dots, \Lambda_m)$ and $W_j = \diag(W_{1,j},\dots W_{m,j})$. We can rewrite equation~\eqref{eqn:100} as 
	\begin{align}
		D = W_1\dots W_q \Lambda. 	\label{eqn:103}
	\end{align}
	Moreover, we have that $\Lambda$ is a diagonal matrix with $\pm 1$ on the diagonal. Since $W_{i,j}$'s are orthonormal matrix with determinant 1, we have $\det(\Lambda) = \det(D) = \det(U)=1$. That is, $\Lambda$ has an even number of $-1$'s on the diagonal. Then we can group the $-1$'s into $2\times 2$ blocks. Note that $\begin{bmatrix}
	-1 & 0\\
	0 & -1
	\end{bmatrix}$ is the rotation matrix $T(\pi)$. Thus we can write $\Lambda$ as a concatenation of $+1$'s on the diagonal and block $T(\pi)$. Then applying Claim~\ref{claim:2d} (on each of the block $T(\pi)$), we obtain that there are $W_1',\dots, W_q'$ such that 
	\begin{align}
	\Lambda = W_1'\dots W_q'\label{eqn:101}
	\end{align}
	where $\norm{W_{j}'-\Id}\le \pi/q$. 
	Thus using equation~\eqref{eqn:103} and~\eqref{eqn:10}, we obtain that 
	\begin{align}
	U = SDS^{-1} = SW_1S^{-1}\cdots SW_qS^{-1} \cdot SW_1'S^{-1} \cdots SW_{q}'S^{-1}\mper\nonumber\end{align}
	
      Moreover, we have that for every $j$, $\norm{SW_jS^{-1}-\Id} = \norm{S(W_j-\Id)S^{-1}} = \norm{W_j-\Id} \le\pi/q$, because $S$ is an orthonormal matrix. The same can be proved for $W_j'$. Thus let $B_j = SW_jS^{-1}-\Id $ for $j\le q$ and $B_{j+q} = SW_{j}'S^{-1}-\Id$, and we can rewrite, 
      \begin{align}
      U = (\Id+B_1)\dots (\Id + B_q)\mper\nonumber
      \end{align}
      
      We can deal with $V$ similarly by decomposing $V$ into $2q$ matrices that are $\pi/q$ close to identity matrix, 
      \begin{align}
      V^{\top} = (\Id + B_1')\dots (\Id + B_{2q}')\mper\nonumber\end{align}


%
	
	\sloppy Last, we deal with the diagonal matrix $K$. Let $K = \diag(k_i)$. We have $\min k_i = \sigma_{\min}(R), \max k_i=\sigma_{\max}(R)$. Then, we can write $K =  (K')^{p}$ where $K' = \diag(k_i^{1/p})$ and $p$ is an integer to be chosen later. We have that $\Norm{K'-\Id} \le \max |k_i^{1/p} -1|\le  \max |e^{\log k_i \cdot 1/p} -1|$. When $p \ge \gamma = \max\{\log \max k_i, - \log \min k_i\} = \max\{\log \sigma_{\max}(R), -\log \sigma_{\min}(R)\}$, we have that 
	\begin{align}
	\Norm{K'-\Id} \le \max |e^{\log k_i \cdot 1/p} -1|\le 3\max |\log k_i\cdot 1/p| = 3\gamma /p\mper\tag{since $|e^{x}-1|\le 3|x|$ for $|x|\le 1$}
	\end{align}
	
	Let $B_1''=\dots = B_p'' =  K'-\Id$ and then we have $K = (\Id + B_p'')\cdots(\Id+B_1'')$. Finally, we choose $p = \frac{3\gamma \ell}{4\pi + 3\gamma}$ and $q = \frac{\pi\ell}{4\pi + 3\gamma}$, \footnote{Here for notational convenience, $p,q$ are not chosen to be integers. But rounding them to closest integer will change final bound of the norm by small constant factor. }and let $A_{p+4q}=B_{2q}, \dots = A_{p+2q+1} = B_1, A_{p+2q}= B_p'', \dots, A_{2q+1} = B_1'', A_{2q} = B_{2q}', \dots, A_1 = B_1'$. We have that $4q+ p= \ell$ and 
	\begin{align}
	R = UKV^{\top} = (\Id+A_{\ell})\dots (\Id + A_1)\mper\nonumber
	\end{align}
	Moreover, we have $\gnorm{A} \le \max\{\norm{B_j}, \norm{B_j'}.\norm{B_j''}\}\le \max\{\pi/q, 3\gamma/p\}\le \frac{4\pi + 3\gamma}{\ell}$, as desired. 
\end{proof}
\subsection{Proof of Lemma~\ref{lem:gradients}}

We compute the partial gradients by definition. 
	Let $\Delta_{j} \in \R^{d\times d}$ be an infinitesimal change to $A_j$. Using Claim~\ref{claim:1}, consider the Taylor expansion of $f(A_1,\dots, A_\ell+\Delta_j, \dots, A_{\ell})$ 
	\begin{align}
	& f(A_1,\dots, A_\ell+\Delta_j, \dots, A_{\ell})\nonumber\\& = \Norm{((\Id+A_{\ell})\cdots(\Id+A_j+\Delta_j)\dots (\Id+A_1)-R)\Sigma^{1/2}}_F^2 \nonumber\\
	& = \Norm{((\Id+A_{\ell})\cdots (\Id+A_1)-R)\Sigma^{1/2} + (\Id+A_{\ell})\cdots \Delta_j \dots (\Id+A_1)\Sigma^{1/2}}_F^2 \nonumber\\
	& = \Norm{((\Id+A_{\ell})\cdots (\Id+A_1)-R)\Sigma^{1/2}}_F^2 + \nonumber\\
	& ~~~~~2\inner{((\Id+A_{\ell})\cdots (\Id+A_1)-R)\Sigma^{1/2}, (\Id+A_{\ell})\cdots \Delta_j \dots (\Id+A_1)\Sigma^{1/2}} + O(\norm{\Delta_j}_F^2)\nonumber\\
	& = f(A) + 2\inner{ (\Id+ A_{j+1}^{\top})\dots (\Id + A_{\ell}^{\top})E \Sigma(\Id + A_{1}^{\top})\dots (\Id + A_{j-1}^{\top}) ,\Delta_j} + O(\norm{\Delta_j}_F^2)\mper\nonumber
	\end{align}
	By definition, \sloppy this means that the $\frac{\partial f}{\partial A_j} = 2(\Id+ A_{\ell}^{\top})\dots (\Id + A_{j+1}^{\top})E \Sigma(\Id + A_{j-1}^{\top})\dots (\Id + A_{1}^{\top})$.\qed

\section{Missing Proofs in Section~\ref{sec:representation}}\label{sec:proof:representation}

In this section, we provide the full proof of Theorem~\ref{thm:representation}. We start with the following Lemma that constructs a building block $\cT$ that transform $k$ vectors of an arbitrary sequence of $n$ vectors to any arbitrary set of vectors, and main the value of the others. For better abstraction we use $\alpha\pp{i}$,$\beta\pp{i}$ to denote the sequence of vectors.  

\begin{lemma}\label{lem:building_block}
	Let $S\subset [n]$ be of size $k$. Suppose $\alpha\pp{1},\dots, \alpha\pp{n}$ is a sequences of $n$ vectors satisfying a) for every $1\le i\le  n$, we have $1-\rho'\le \norm{\alpha_i}^{2} \le 1+\rho'$, and b) if $i\neq j$ and $S$ contains at least one of $i,j$, then  $\norm{\alpha\pp{i}-\alpha\pp{j}}\ge  3\rho'$. Let $\beta\pp{1},\dots, \beta\pp{n}$ be an arbitrary sequence of vectors. Then, there exists $U,V\in \R^{k\times k}, s$ such that  for every $i\in S$, we have $\cT_{U,V, s}(\alpha\pp{i}) = \beta\pp{i} -\alpha\pp{i}$, and moreover,  for every $i\in [n]\backslash S$ we have $\cT_{U,V, s}(\alpha\pp{i}) = 0$. 
\end{lemma}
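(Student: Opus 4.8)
The plan is to choose $U$ and $s$ so that the post-ReLU hidden vector $\relu(U\alpha\pp{i}+s)$ behaves as a \emph{selector}: for $i\in S$ it is a positive multiple of a single $i$-dependent standard basis vector, and for $i\notin S$ it is exactly zero. Once this is arranged, $V$ has enough freedom to route each such basis direction to the prescribed output, and the lemma follows.

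Concretely, I would enumerate $S=\{i_1,\dots,i_k\}$ and, for each $t\in\{1,\dots,k\}$, build the $t$-th row $u_t$ of $U$ and the $t$-th entry $s_t$ of $s$ so that the affine functional $\alpha\mapsto\inner{u_t,\alpha}+s_t$ is strictly positive at $\alpha=\alpha\pp{i_t}$ and non-positive at $\alpha=\alpha\pp{j}$ for every $j\ne i_t$. This is a scaled separating hyperplane: take $u_t=c\,\alpha\pp{i_t}$ and $s_t=-c\,b_t$ for a large constant $c>0$ and a threshold $b_t$ with $\max_{j\ne i_t}\inner{\alpha\pp{i_t},\alpha\pp{j}}<b_t<\norm{\alpha\pp{i_t}}^2$. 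Such a $b_t$ exists: writing $\inner{\alpha\pp{i_t},\alpha\pp{j}}=\tfrac12\bigl(\norm{\alpha\pp{i_t}}^2+\norm{\alpha\pp{j}}^2-\norm{\alpha\pp{i_t}-\alpha\pp{j}}^2\bigr)$, hypothesis (a) controls $\norm{\alpha\pp{j}}^2-\norm{\alpha\pp{i_t}}^2$, while hypothesis (b) --- which applies to \emph{every} $j\ne i_t$ precisely because $i_t\in S$ --- lower bounds $\norm{\alpha\pp{i_t}-\alpha\pp{j}}$, and together these push $\inner{\alpha\pp{i_t},\alpha\pp{j}}$ below $\norm{\alpha\pp{i_t}}^2$ by a margin depending only on $\rho'$. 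Then $\lambda_t:=\inner{u_t,\alpha\pp{i_t}}+s_t=c\bigl(\norm{\alpha\pp{i_t}}^2-b_t\bigr)>0$, while $\inner{u_t,\alpha\pp{j}}+s_t\le 0$ for all $j\ne i_t$ (in particular for all $j\notin S$ and all other indices in $S$).

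With $U$ and $s$ fixed this way, for each $t$ exactly the $t$-th coordinate of $U\alpha\pp{i_t}+s$ is positive, so $\relu(U\alpha\pp{i_t}+s)=\lambda_t\,e_t$, while for $j\notin S$ all coordinates of $U\alpha\pp{j}+s$ are non-positive, so $\relu(U\alpha\pp{j}+s)=0$. I would then define $V$ column by column, setting its $t$-th column to $\tfrac1{\lambda_t}\bigl(\beta\pp{i_t}-\alpha\pp{i_t}\bigr)$. This gives $\cT_{U,V,s}(\alpha\pp{i_t})=V(\lambda_t e_t)=\beta\pp{i_t}-\alpha\pp{i_t}$ for every $t$, and $\cT_{U,V,s}(\alpha\pp{j})=V\cdot 0=0$ for every $j\notin S$, which is precisely the claim.

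The main obstacle is the separation step: guaranteeing that one affine functional is simultaneously positive at $\alpha\pp{i_t}$ and non-positive at all $n-1$ other points, with a margin bounded away from $0$ so that $\lambda_t>0$ and the division defining $V$ is legitimate. This is where assumptions (a) and (b) are combined and where the constants must be tracked carefully --- it is cleanest to regard $\rho'$ as a small absolute constant inherited from the Johnson--Lindenstrauss step, and to pick $c$ large enough that each $\lambda_t$ stays bounded below. Everything else is immediate from the one-hot structure of $\relu(U\alpha\pp{i}+s)$.
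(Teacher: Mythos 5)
Your proposal is correct and follows essentially the same construction as the paper: rows of $U$ proportional to $\alpha^{(i)}$ for $i\in S$, a negative bias $s$ acting as a separating threshold so that $\relu(U\alpha^{(i)}+s)$ is a one-hot selector (zero for $i\notin S$), and $V$ chosen column-by-column to route each selected basis direction to $\beta^{(i)}-\alpha^{(i)}$. The paper simply fixes the scale to $c=1$ and uses a single uniform threshold $b_t = 1-2\rho'$ rather than per-row thresholds, but the separating-hyperplane idea and the inner-product bound via hypotheses (a) and (b) are identical.
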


We can see that the conclusion implies 
\begin{align}
\beta\pp{i} & = \alpha\pp{i} + \cT_{U,V,s}(\alpha\pp{i})~~\forall i\in S \nonumber\\
\alpha\pp{i} & = \alpha\pp{i} + \cT_{U,V,s}(\alpha\pp{i})~~ \forall i\not\in S \nonumber
\end{align}
which is a different way of writing equation~\eqref{eqn:14}.

\begin{proof}[Proof of Lemma~\ref{lem:building_block}]
	Without loss of generality, suppose $S = \{1,\dots, k\}$. We construct $U,V,s$ as follows. Let the $i$-th row of $U$ be  $\alpha\pp{i}$ for $i\in [k]$, and let $s = -(1-2\rho')\cdot \mathbf{1}$ where $\mathbf{1}$ denotes the all 1's vector. Let the $i$-column of $V$ be $\frac{1}{\norm{\alpha\pp{i}}^2-(1-2\rho')} (\beta\pp{i}-\alpha\pp{i})$ for $i\in [k]$. 
	
	Next we verify that the correctness of the construction.  We first consider $1\le i \le k$.  We have that $U\alpha\pp{i}$ is a a vector with $i$-th coordinate equal to $\norm{\alpha\pp{i}}^2\ge 1-\rho'$. The $j$-th coordinate of $U\alpha\pp{i}$ is equal to $\inner{\alpha\pp{j},\alpha\pp{i}}$, which can be upperbounded using the assumption of the Lemma by 
	\begin{align}
	\inner{\alpha\pp{j},\alpha\pp{i}} = \frac{1}{2}\left(\norm{\alpha\pp{i}}^2 + \norm{\alpha\pp{j}}^2\right) - \norm{\alpha\pp{i}-\alpha\pp{j}}^2 \le 1+\rho' - 3\rho' \le 1-2\rho'\mper\label{eqn:12}
	\end{align}
	Therefore, this means $U\alpha\pp{i} - (1-2\rho')\cdot\allones$contains a single positive entry (with value at least $\norm{\alpha\pp{i}}^2-(1-2\rho')\ge \rho'$), and all other entries being non-positive. This means that 
		$\relu(U\alpha\pp{i}+b) = \left(\norm{\alpha\pp{i}}^2-(1-2\rho')\right) e_i $ where $e_i$ is the $i$-th natural basis vector. It follows that $V\relu(U\alpha\pp{i}+b) = (\norm{\alpha\pp{i}}^2-(1-2\rho'))Ve_i= \beta\pp{i}-\alpha\pp{i}$. 
	
	Finally, consider $n\ge i > k$. Then similarly to the computation in equation~\eqref{eqn:12}, $U\alpha\pp{i}$ is a vector with all coordinates less than $1-2\rho'$. Therefore $U\alpha\pp{i} + b$ is a vector with negative entries. Hence we have $\relu(U\alpha\pp{i} + b) = 0$, which implies $V\relu(U\alpha\pp{i}+b) = 0$. 
\end{proof}
Now we are ready to state the formal version of Lemma~\ref{lem:induction_informal}. 

\begin{lemma}\label{lem:induction}
	Suppose a sequence of $n$ vectors $z\pp{1},\dots, z\pp{n}$ satisfies a relaxed version of Assumption~\ref{ass:data}: a) for every $i$, $1-\rho'\le \norm{z\pp{i}}^2 \le 1+\rho'$ b) for every $i\neq j$, we have $\norm{z\pp{i}-z\pp{j}}^2\ge \rho';$. Let $v\pp{1},\dots, v\pp{n}$  be defined above. 
	Then there exists weigh matrices $(A_1,B_1),\dots, (A_{\ell},B_{\ell})$, such that given $\forall i, h_0\pp{i} = z\pp{i}$, we have, 
	\vspace{-.05in}
	\begin{align}
	\forall i\in \{1,\dots, n\}	, ~~~ h_{\ell}\pp{i} = v\pp{i}\mper\nonumber
	\end{align}
\end{lemma}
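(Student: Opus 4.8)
The plan is to realize the $\ell$ middle residual blocks one at a time, using Lemma~\ref{lem:building_block} once per block to drag a fresh batch of at most $k$ hidden vectors onto their surrogate targets while leaving every other hidden vector unchanged, and to control the whole process by a short induction on the layer index.

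Concretely, I would fix an arbitrary partition $[n] = S_1 \sqcup \cdots \sqcup S_\ell$ into $\ell = \lceil n/k\rceil$ blocks, each of size at most $k$ (Lemma~\ref{lem:building_block} is stated for $|S|=k$ but applies verbatim when $|S|<k$, zeroing out the unused rows of $U$ and columns of $V$). The inductive invariant is that after the first $j$ residual layers one has $h_j\pp{i} = v\pp{i}$ for every $i \in S_1\cup\cdots\cup S_j$ and $h_j\pp{i} = z\pp{i}$ for every remaining $i$; the base case $j=0$ is exactly the hypothesis $h_0\pp{i}=z\pp{i}$. For the step from $j-1$ to $j$, I invoke Lemma~\ref{lem:building_block} with index set $S = S_j$, source vectors $\alpha\pp{i} := h_{j-1}\pp{i}$, and targets $\beta\pp{i} := v\pp{i}$ (the values $\beta\pp{i}$ for $i\notin S_j$ play no role in that lemma), obtaining $U,V\in\R^{k\times k}$ and a bias $s$; setting $(A_j,B_j,b_j) := (U,V,s)$ gives $h_j\pp{i} = h_{j-1}\pp{i} + \cT_{A_j,B_j,b_j}(h_{j-1}\pp{i})$, which equals $v\pp{i}$ for $i\in S_j$ and equals $h_{j-1}\pp{i}$ for $i\notin S_j$, so the invariant is preserved. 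Taking $j = \ell$ and using $S_1\cup\cdots\cup S_\ell = [n]$ yields $h_\ell\pp{i} = v\pp{i}$ for all $i$.

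The substance is in verifying that the hypotheses of Lemma~\ref{lem:building_block} hold at every step for the \emph{current} family $\{h_{j-1}\pp{i}\}_{i=1}^n$. By the invariant, each $h_{j-1}\pp{i}$ is either the original $z\pp{i}$ or one of the unit surrogate vectors $q_1,\dots,q_r$, so the near-unit-norm condition is immediate. For the separation condition, the key observation is that it is only required for pairs containing at least one index of $S_j$, and any index of $S_j$ has not occurred in $S_1,\dots,S_{j-1}$, so its current hidden vector is still the \emph{untransformed} $z\pp{i}$. Hence I only need each $z\pp{i}$ to be at least $3\rho'$ away from every other current vector: the distance to another $z\pp{i'}$ is handled by the hypothesis $\|z\pp{i}-z\pp{i'}\|^2\ge\rho'$ (which forces $\|z\pp{i}-z\pp{i'}\|\ge\sqrt{\rho'}\ge 3\rho'$ for small $\rho'$), and the distance to a surrogate $q_m$ needs the separate fact that the $q_m$ were chosen far from all the $z\pp{i}$. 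Note that two \emph{already-transformed} vectors may coincide (if two data points share a label), but this causes no problem precisely because neither of them is in $S_j$.

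This last fact — that $\|z\pp{i}-q_m\| \ge 3\rho'$ for all $i,m$ — is the crux, and it is what forces the width $k = \Theta(\log n)$. Drawing $q_1,\dots,q_r$ as independent uniformly random unit vectors in $\R^k$, for fixed $i,m$ concentration of $\langle z\pp{i},q_m\rangle$ gives $|\langle z\pp{i},q_m\rangle|\le 1/2$ except with probability $e^{-\Omega(k)}$, and then $\|z\pp{i}-q_m\|^2 = \|z\pp{i}\|^2 + 1 - 2\langle z\pp{i},q_m\rangle \ge (1-\rho') + 1 - 1 \ge 3\rho'$ for $\rho'$ small. A union bound over the at most $nr\le n^2$ pairs $(i,m)$, the $\binom{r}{2}$ pairs $(q_m,q_{m'})$ needed for the $q_m$ to be pairwise near-orthogonal, and the Johnson--Lindenstrauss event that keeps the $z\pp{i}=h_0\pp{i}$ near-unit-norm and pairwise $\sqrt{\rho'}$-separated, closes as soon as $k\ge C\log n$ for a suitable constant $C$. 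The main obstacle is therefore not the induction, which is mechanical, but tracking the constants — $\rho$ versus $\rho'$, the JL distortion, the $1/2$ inner-product threshold — so that this one union bound goes through; either one records the $z$-to-$q_m$ separation as an explicit hypothesis (justified when the $q_m$ are sampled) or derives it inline from their randomness.
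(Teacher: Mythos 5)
Your proof is correct and follows essentially the same route as the paper: repeated application of Lemma~\ref{lem:building_block}, one residual block per batch of $k$ indices, with the inductive invariant that processed indices already sit at their surrogate $v\pp{i}$ and unprocessed ones are still at $z\pp{i}$, and the crucial observation that the separation hypothesis only needs to hold for pairs touching the fresh batch $S_j$ (whose members are still untransformed). You are somewhat more explicit than the paper about why the already-transformed vectors may safely coincide and about the probabilistic bookkeeping (union bound over $z$-vs-$q$ pairs, the JL event, and $k=\Theta(\log n)$), but the decomposition and the key lemma are the same.
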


We will use Lemma~\ref{lem:building_block} repeatedly  to construct building blocks $\cT_{A_j,B_k,s_j}(\cdot)$, and thus prove Lemma~\ref{lem:induction}. Each building block $\cT_{A_j,B_k,s_j}(\cdot)$ takes a subset of $k$ vectors among $\{z\pp{1},\dots, z\pp{n}\}$ and convert them to $v\pp{i}$'s, while maintaining all other vectors as fixed. Since they are totally $n/k$ layers, we finally maps all the $z\pp{i}$'s  to the target vectors $v\pp{i}$'s. 
\begin{proof}[Proof of Lemma~\ref{lem:induction}]
		We use Lemma~\ref{lem:building_block} repeatedly. Let $S_1 = [1,\dots, k]$. Then using Lemma~\ref{lem:building_block} with $\alpha\pp{i} = z\pp{i}$ and $\beta\pp{i} =  v\pp{i}$ for $i\in [n]$, we obtain that there exists $A_1,B_1,b_1$ such that for $i\le k$, it holds that $h_1\pp{i} = z\pp{i} + \cT_{A_1,B_1,b_1}(z\pp{i}) = v\pp{i}$, and for $i\ge k$, it holds that $h_1\pp{i} = z\pp{i} + \cT_{A_1,B_1,b_1}(z\pp{i}) = z\pp{i}$. 
	
	Now we construct the other layers inductively. We will construct the layers such that the hidden variable at layer $j$ satisfies $h_{j}\pp{i} = v\pp{i}$ for every $1\le i\le jk$, and $h_j\pp{i} = z\pp{i}$ for every $n\ge i> jk$. Assume that we have constructed the first $j$ layer and next we use Lemma~\ref{lem:building_block} to construct the $j+1$ layer. Then we argue that the choice of $\alpha\pp{1} = v\pp{1}, \dots, \alpha\pp{jk} = v\pp{jk}$, $\alpha\pp{jk+1} = z\pp{jk+1},\dots, \alpha\pp{n} = z\pp{n}$, and $S = \{jk+1,\dots, (j+1)k\}$ satisfies the assumption of Lemma~\ref{lem:building_block}. Indeed, because $q_i$'s are chosen uniformly randomly, we have w.h.p for every $s$ and $i$, $\inner{q_s, z\pp{i}}\le 1-\rho'$. Thus, since $v\pp{i}\in \{q_1,\dots, q_r\}$, we have that $v\pp{i}$ also doesn't correlate with any of the $z\pp{i}$. Then we apply Lemma~\ref{lem:building_block} and conclude that there exists $A_{j+1} = U, B_{j+1} = V, b_{j+1} = s$ such that $\cT_{A_{j+1},b_{j+1}, b_{j+1}}(v\pp{i}) = 0$ for $i\le jk$, $\cT_{A_{j+1},b_{j+1}, b_{j+1}}(z\pp{i}) = v\pp{i}-z\pp{i}$ for $jk < i \le (j+1)k$, and $\cT_{A_{j+1},b_{j+1}, b_{j+1}}(z\pp{i}) = 0$ for $n\ge  i > (j+1)k$. These imply that 
	\begin{align}
	h_{j+1}^{\pp{i}} & = h_j\pp{i} + \cT_{A_{j+1},b_{j+1}, b_{j+1}}(v\pp{i}) = v\pp{i} \quad \forall 1\le i \le jk\nonumber\\
	h_{j+1}^{\pp{i}} & = h_j\pp{i} + \cT_{A_{j+1},b_{j+1}, b_{j+1}}(z\pp{i}) = v\pp{i} \quad \forall jk+1\le i \le (j+1)k \nonumber\\
	h_{j+1}^{\pp{i}} & = h_j\pp{i} + \cT_{A_{j+1},b_{j+1}, b_{j+1}}(z\pp{i}) = z\pp{i}\quad \forall (j+1)k <  i \le n \nonumber
	\end{align}
	Therefore we constructed the $j+1$ layers that meets the inductive hypothesis for layer $j+1$. Therefore, by induction we get all the layers, and the last layer satisfies that $ h_{\ell}\pp{i}= v\pp{i}$ for every example $i$. 	\end{proof}
Now we ready to prove Theorem~\ref{thm:representation}, following the general plan sketched in Section~\ref{sec:representation}. 
\begin{proof}[Proof of Theorem~\ref{thm:representation}]
	We use formalize the intuition discussed below Theorem~\ref{thm:representation}. First, take $k= c(\log n)/\rho^2$ for sufficiently large absolute constant $c$ (for example, $c=10$ works), by Johnson-Lindenstrauss Theorem (\cite{johnson1984extensions}, or see~\cite{wiki:JL}) we have that when $A_0$ is a random matrix with standard normal entires, with high probability, all the pairwise distance between the the set of vectors $\{0, x\pp{1},\dots, x\pp{n}\}$ are preserved up to $1\pm \rho/3$ factor. That is, we have that for every $i$, $1-\rho/3 \le \norm{A_0x\pp{i}} \le 1+\rho /3$, and for every $i\neq j$, $\norm{A_0x\pp{i}-A_0x\pp{j}}\ge \rho (1-\rho/3)\ge 2\rho /3$. Let $z\pp{i}= A_0x\pp{i}$ and $\rho' = \rho/3$. Then we have $z\pp{i}$'s satisfy the condition of Lemam~\ref{lem:induction}. We pick $r$ random vectors $q_1, \dots, q_r$ in $\R^k$.   Let $v\pp{1},\dots, v\pp{n}$ be defined as in equation~\eqref{eqn:def-v}.  	
	Then by Lemma~\ref{lem:induction}, we can construct matrices $(A_1,B_1),\dots, (A_{\ell},B_{\ell})$ such that 
	\begin{align}
	h_{\ell}\pp{i} = v\pp{i}\mper\label{eqn:13}
	\end{align}	
	Note that $v\pp{i}\in \{q_1,\dots, q_r\}$, and $q_i$'s are random unit vector. Therefore, the choice of $\alpha\pp{1}=q_1,\dots, \alpha\pp{r} = q_r$, $\beta\pp{1}=e_1,\dots, \beta\pp{r}=e_r$, and satisfies the condition of Lemma~\ref{lem:building_block}, and using Lemma~\ref{lem:building_block} we conclude that there exists $A_{\ell+1}, B_{\ell+1}, s_{\ell+1}$ such that 
	\begin{align}
	e_j =  \cT_{A_{\ell+1},B_{\ell+1},b_{\ell+1}}(v_j), \textup{ for every } j\in \{1,\dots, r\}\mper. 
	\end{align}
	By the definition of $v\pp{i}$ in equation~\eqref{eqn:def-v} and equation~\eqref{eqn:13}, we conclude that $\hat{y}\pp{i} = h_{\ell}\pp{i} +  \cT_{A_{\ell+1},B_{\ell+1},b_{\ell+1}}(h_{\ell}\pp{i}) = y\pp{i}. $, which complete the proof.
\end{proof}

\section{Toolbox}

In this section, we state two folklore linear algebra statements. The following Claim should be known, but we can't find it in the literature. We provide the proof here for completeness. 
\begin{claim} \label{cliam:block-diagonal}Let $U\in \R^{d\times d}$ be a real normal matrix (that is, it satisfies $UU^{\top} = U^{\top}U$).  Then, there exists an orthonormal matrix $S\in \R^{d\times d}$ such that 
	\begin{align}
		U = SDS^{\top}\mcom \nonumber
	\end{align}
	where $D$ is a real block diagonal matrix that consists of blocks with size at most $2\times 2$. 
\end{claim}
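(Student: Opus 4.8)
The claim is the real analogue of the complex spectral theorem for normal matrices: over $\C$ a normal matrix is unitarily diagonalizable, and we want to descend this to a real orthogonal conjugation into $1\times 1$ and $2\times 2$ blocks. The plan is to start from the complex Schur/spectral decomposition of $U$ and then pair up conjugate eigenvalues. Concretely, since $U$ is real, its non-real eigenvalues come in conjugate pairs $\lambda, \bar\lambda$, and any eigenvector $v$ for $\lambda$ yields $\bar v$ as an eigenvector for $\bar\lambda$. Because $U$ is normal, eigenvectors for distinct eigenvalues are orthogonal (this is the key consequence of normality I will lean on), so the various eigenspaces are mutually orthogonal and together span $\R^d$ after taking real and imaginary parts.

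The steps, in order, would be: (1) Complexify and diagonalize: write $\C^d$ as an orthogonal (w.r.t.\ the standard Hermitian inner product) direct sum of eigenspaces $E_\lambda$ of $U$, using that $U$ normal implies $U$ is unitarily diagonalizable. (2) For each \emph{real} eigenvalue $\lambda$, the eigenspace $E_\lambda$ is stable under complex conjugation, so it has a real orthonormal basis; collect these vectors — each contributes a $1\times 1$ block equal to $\lambda$. (3) For each non-real eigenvalue $\lambda = a+bi$ with $b\neq 0$, pair $E_\lambda$ with $E_{\bar\lambda} = \overline{E_\lambda}$; pick a Hermitian-orthonormal basis $v_1,\dots,v_m$ of $E_\lambda$, and for each $v_k = p_k + i q_k$ form the real vectors $\sqrt{2}\,p_k$ and $\sqrt{2}\,q_k$. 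One checks these are real-orthonormal (orthogonality of $v_k$ to $\bar v_k$ and to the other $v_j, \bar v_j$ forces the needed real inner-product identities), span the real part of $E_\lambda \oplus E_{\bar\lambda}$, and that $U$ acts on $\mathrm{span}(p_k, q_k)$ by the $2\times 2$ rotation-scaling matrix $\begin{bmatrix} a & b \\ -b & a\end{bmatrix}$ (or its transpose, depending on sign conventions). (4) Assemble all these real orthonormal vectors into the columns of $S$; by construction $S^\top S = \Id$ and $S^\top U S = D$ is block diagonal with blocks of size $1$ or $2$, giving $U = SDS^\top$.

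The main obstacle — really the only non-mechanical point — is step (3): verifying that the real and imaginary parts of a Hermitian-orthonormal eigenbasis of $E_\lambda$, across the conjugate pair, assemble into a genuinely \emph{real-orthonormal} set of the right cardinality. This needs the identities $\langle p_j, p_k\rangle = \langle q_j, q_k\rangle = \tfrac12\delta_{jk}$ and $\langle p_j, q_k\rangle = 0$, which follow by expanding $\langle v_j, v_k\rangle = \delta_{jk}$ and $\langle v_j, \bar v_k\rangle = 0$ into real and imaginary parts and using $b\neq 0$ to separate them; one must also confirm no double-counting occurs between a pair $(E_\lambda, E_{\bar\lambda})$ and any other pair, which is immediate from orthogonality of distinct eigenspaces. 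Everything else is bookkeeping. Note the conclusion as stated uses $S^\top$ rather than $S^{-1}$, which is consistent since $S$ is orthonormal; the proof above delivers exactly that form.
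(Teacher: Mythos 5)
Your proof is correct and takes essentially the same route as the paper: both start from the complex unitary diagonalization of the normal matrix, pair each non-real eigenvalue with its conjugate, and pass to real and imaginary parts of the corresponding eigenvectors to build the orthogonal matrix $S$ and the $1\times 1$ and $2\times 2$ blocks. Your write-up is, if anything, slightly more explicit than the paper's about verifying real orthonormality across blocks, but the underlying argument is the same.
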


\Tnote{May need to expand this proof. }
\begin{proof}
	Since $U$ is a normal matrix, it is unitarily diagonalizable (see ~\cite{normalmatrix} for backgrounds). Therefore, there exists unitary matrix $V$ in $\C^{d\times d}$ and diagonal matrix in $\C^{d\times d}$ such that $U$ has eigen-decomposition $U =  V\Lambda V^*$. Since $U$ itself is a real matrix, we have that the eigenvalues (the diagonal entries of $\Lambda$) come as conjugate pairs, and so do the eigenvectors (which are the columns of $V$). That is, we can group the columns of $V$ into pairs $(v_1,\bar{v}_1),\dots, (v_{s},\bar{v_s}), v_{s+1},\dots, v_{t}$, and let the corresponding eigenvalues be $\lambda_1,\bar{\lambda}_1, \dots, \lambda_{\lambda_s},\bar{\lambda}_s, \lambda_{s+1},\dots, \lambda_t$. Here $\lambda_{s+1},\dots, \lambda_t\in \R$. Then we get that $U =\sum_{i=1}^s 2\Re(v_i\lambda_i v_i^*) + \sum_{i=s+1}^t v_i\lambda_iv_i^{\top}$. Let $Q_i = \Re(v_i\lambda_i v_i^*)$, then we have that $Q_i$ is a real matrix of rank-2. Let $S_i\in \R^{d\times 2}$ be a orthonormal basis of the column span of $Q_i$ and then we have that $Q_i$ can be written as $Q_i = S_iD_iS_i^{\top}$ where $D_i$ is a $2\times 2$ matrix. Finally, let $S =[S_1,\dots, S_s,v_{s+1},\dots, v_t]$, and $D = \diag(D_1,\dots, D_s,\lambda_{s+1},\dots, \lambda_{t})$ we complete the proof. 
\end{proof}

The following Claim is used in the proof of Theorem~\ref{thm:main}. We provide a proof here for completeness. 
\begin{claim}[folklore]\label{claim:sigmamin}	For any two matrices $A,B\in \R^{d\times d}$, we have that 
	\begin{align}
	\norm{AB}_F \ge \sigma_{\min}(A) \norm{B}_F \mper\nonumber
	\end{align}
\end{claim}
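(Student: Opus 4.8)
The plan is to reduce the statement to the elementary pointwise bound $\norm{Av}\ge \sigma_{\min}(A)\norm{v}$ for every $v\in\R^d$, and then to apply it column by column to $B$. Write $B = [b_1,\dots,b_d]$ in terms of its columns $b_j\in\R^d$; then the columns of $AB$ are $Ab_1,\dots,Ab_d$, so by definition of the Frobenius norm
\begin{align}
\norm{AB}_F^2 = \sum_{j=1}^d \norm{Ab_j}^2\mper\nonumber
\end{align}
So it suffices to lower bound each term $\norm{Ab_j}^2$.

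For the pointwise bound, I would invoke the singular value decomposition $A = U\Sigma W^{\top}$, where $U,W\in\R^{d\times d}$ are orthonormal and $\Sigma = \diag(\sigma_1,\dots,\sigma_d)$ with $\sigma_i\ge \sigma_{\min}(A)$. Since orthonormal matrices preserve Euclidean length, for any $v$ we have $\norm{Av} = \norm{\Sigma W^{\top}v}$; setting $u = W^{\top}v$ (so that $\norm{u} = \norm{v}$), we get $\norm{\Sigma u}^2 = \sum_i \sigma_i^2 u_i^2 \ge \sigma_{\min}(A)^2 \sum_i u_i^2 = \sigma_{\min}(A)^2\norm{v}^2$. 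Plugging $v = b_j$ into the display above yields $\norm{AB}_F^2 \ge \sigma_{\min}(A)^2\sum_{j}\norm{b_j}^2 = \sigma_{\min}(A)^2\norm{B}_F^2$, and taking square roots finishes the proof. (An equivalent one-line route is $\norm{AB}_F^2 = \trace(B^{\top}A^{\top}AB) \ge \sigma_{\min}(A)^2\,\trace(B^{\top}B) = \sigma_{\min}(A)^2\norm{B}_F^2$, using $A^{\top}A \succeq \sigma_{\min}(A)^2\Id$ and monotonicity of $M\mapsto \trace(B^{\top}MB)$ in the PSD order.)

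There is no real obstacle here — the only ``choice'' is to decompose the Frobenius norm in the way that exposes the inequality, i.e.\ splitting $\norm{AB}_F^2$ over the columns of $B$ (equivalently, absorbing the orthonormal factors of the SVD of $A$ and using orthogonal invariance of $\norm{\cdot}_F$). Once that is set up, the estimate is a single scalar inequality, and the degenerate case $\sigma_{\min}(A)=0$ is automatic since the right-hand side is then $0$.
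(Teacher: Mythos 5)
Your proposal is correct, and in fact it contains two proofs. Your main argument — split $\norm{AB}_F^2$ over the columns of $B$, then prove the vector inequality $\norm{Av}\ge\sigma_{\min}(A)\norm{v}$ via the SVD and orthogonal invariance — is a genuinely more elementary route than the paper's, reducing everything to a scalar inequality on coordinates. It has the pedagogical advantage of isolating where $\sigma_{\min}$ enters (a single diagonal scaling) and of not needing any PSD-order machinery. Your parenthetical ``one-line route'' ($\norm{AB}_F^2=\trace(B^\t A^\t A B)\ge\sigma_{\min}(A)^2\trace(B^\t B)$ from $A^\t A\succeq\sigma_{\min}(A)^2\Id$ and monotonicity of $M\mapsto\trace(B^\t M B)$) is precisely the paper's proof. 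So both approaches are present in your write-up; the paper just leads with the trace/Loewner-order argument, which is slightly slicker but assumes familiarity with PSD monotonicity, whereas your column-by-column argument buys transparency at the cost of a couple more lines.
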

\begin{proof}
	Since $\sigma_{\min}(A)^2$ is the smallest eigenvalue of $A^{\top}A$, we have that 
	\begin{align}
	B^{\top}A^{\top}AB \succeq B^{\top}\cdot \sigma_{\min}(A)^2\Id \cdot B\mper\nonumber
	\end{align}
	Therefore, it follows that 
	\begin{align}
		\norm{AB}_F^2 & =\trace(B^{\top}A^{\top}AB) \ge \trace(B^{\top}\cdot \sigma_{\min}(A)^2\Id \cdot B)\nonumber \\
		& = \sigma_{\min}(A)^2 \trace(B^{\top}B) = \sigma_{\min}(A)^2\norm{B}_F^2\mper\nonumber
	\end{align}
	Taking square root of both sides completes the proof. 
\end{proof}

\bibliographystyle{plain}

\end{document}